\documentclass{cambridge6A}
\pdfoutput=1 
\usepackage{natbib}
\usepackage{rotating}
\usepackage{floatpag}
\rotfloatpagestyle{empty}
\usepackage{graphicx}
\usepackage{multind}\ProvidesPackage{multind}
\makeindex{subject}

\usepackage{alex}
\usepackage{amsfonts,amsmath,amsthm}
\usepackage{tikz}
\usepackage{mdwlist,url}
\usetikzlibrary{arrows}

\tikzstyle{clear}=[minimum width=0.8cm, minimum height=0.8cm]
\tikzstyle{var}=[draw, minimum width=0.8cm, minimum height=0.8cm]
\tikzstyle{proc}=[fill=yellow, draw, minimum width=0.7cm, minimum height=0.7cm, circle]

\def\R{\mathbb{R}}
\def\E{\mathbb{E}}
\def\Sig{\varSigma}

\newcommand{\myidx}[1]{}
\begin{document}

\title{Parallel Online Learning}

\author{Daniel Hsu, Nikos Karampatziakis, John Langford and Alex J. Smola}

\maketitle

\section{Online Learning}

One well-known general approach to machine learning is to repeatedly
greedily update a partially learned system using a single labeled
data instance.  A canonical example of this is provided by the
perceptron algorithm~\citep{Rosenblatt58} \myidx{Perceptron} 
which modifies a weight
vector by adding or subtracting the features of a misclassified
instance.  More generally, typical methods compute the gradient of the
prediction's loss with respect to the weight vector's parameters, and then
update the system according to the negative gradient.
This basic approach has many variations and extensions, as well as at least
two names.
In the neural network literature, this approach is often called
``stochastic gradient descent'' \myidx{stochastic gradient descent}, while in the learning theory literature it
is typically called ``online gradient descent''\myidx{online gradient descent}.
For the training of complex nonlinear prediction systems, the stochastic
gradient descent approach was described long ago and has been standard
practice for at least two decades~\citep{BHo69,RumHinWil86c,Amari67}.

Algorithm~\ref{alg:GD} describes the basic gradient descent algorithm
we consider here.  The core algorithm uses a differentiable loss
function $\ell = \ell(\cdot,y)$ to measure the quality of a prediction $\hat{y}$ with
respect to a correct prediction $y$, and a sequence of learning rates
$(\eta_t)$.
Qualitatively, a ``learning rate'' \myidx{learning rate} is the degree to which the weight
parameters are adjusted to predict in accordance with a data instance.
For example, a common choice is squared loss where $\ell(\hat{y},y) =
(y-\hat{y})^2$ and a common learning rate sequence is $\eta_t =
1/\sqrt{t}$.

\begin{algorithm}[h]
  \caption{Gradient Descent \label{alg:GD}}
  \begin{algorithmic}
    \STATE {\bf input} loss function $l$, learning rate schedule $\eta_t$
    \STATE {\bf initialize} for all $i \in \{1,\ldots,n\}$, weight$\,\,\,w_i := 0$ 
    \FOR{$t=1$ to $T$}
    \STATE Get next feature vector $x\in \mathbb{R}^n$
    \STATE Compute prediction $\hat{y} := \inner{w}{x}$
    \STATE Get corresponding label $y$
    \STATE For $i \in \{1,\ldots,n\}$ compute gradient $g_i :=
    \frac{\partial \ell(\hat{y},y)}{\partial w_i} \,\, \left( = \frac{\partial \ell(\inner{w}{x},y)}{\partial w_i} \right)$
    \STATE For $i \in \{1,\ldots,n\}$ update $w_i := w_i - \eta_t g_i$
    \ENDFOR
  \end{algorithmic}
\end{algorithm}

There are several basic observations regarding efficiency of online learning approaches.
\begin{itemize*}
  \item At a high level, many learning system make a sequence of
    greedy improvements.  For such systems, it is difficult to reduce
    these improvements to one or just a few steps of greedy
    improvement, simply because the gradient provides local
    information relevant only to closely similar parameterizations,
    while successful prediction is a global property.  This
    observation applies to higher order gradient information such as
    second derivatives as well.  An implication of this observation is
    that multiple steps must be taken, and the most efficient way to
    make multiple steps is to take a step after each instance is
    observed.
\item If the same instance occurs twice in the data, it's useful to
  take advantage of data as it arrives.  Take the extreme case where
  every instance is replicated $n$ times. Here an optimization
  algorithm using fractions of $1/n$ of the data at a time would enjoy
  an $n$-fold speedup relative to an algorithm using full views of the
  data for optimization. While in practice it is difficult to
  ascertain these properties beforehand, it is highly desirable to
  have algorithms which can take advantage of redundancy and
  similarity as data arrives.
\item The process of taking a gradient step is generally amortized by
  prediction itself.
  For instance, with the square loss $\ell(\hat{y},y) = \frac12(\hat{y}-y)^2$, the
  gradient is given by $(\hat{y}-y) x_i$ for $i \in \{1,\ldots,n\}$, so the additional
  cost of the gradient step over the prediction is roughly just a single
  multiply and store per feature.
  Similar amortization can also be achieved with complex nonlinear
  circuit-based functions, for instance, when they are compositions of
  linear predictors.
\item The process of prediction can often be represented in vectorial
  form such that highly optimized linear algebra routines can be
  applied to yield an additional performance improvement.
\end{itemize*}

Both the practice of machine learning and the basic observations above
suggest that gradient descent learning techniques are well suited to
address large scale machine learning problems.  Indeed, the techniques
are so effective, and modern computers are so fast, that we might
imagine no challenge remains. After all, a modern computer might have
8 cores operating at 3GHz, each core capable of $4$ floating point
operations per clock cycle, providing a peak performance of
96GFlops. A large dataset by today's standards is about webscale\myidx{webscale},
perhaps $10^{10}$ instances, each $10^4$ features in size.  Taking the ratio,
this suggests that a well-implemented learning algorithm might be able
to process such a dataset in under 20 minutes. Taking into account
that GPUs\myidx{GPU} are capable of delivering at least one order of magnitude
more computation and that FPGAs\myidx{FPGA} might provide another order of
magnitude, this suggests no serious effort should be required to scale
up learning algorithms, at least for simple linear predictors.

However, considering only floating point performance is
insufficient to capture the constraints imposed by real systems: the
limiting factor is not computation, but rather network limits\myidx{network limits} on
bandwidth and latency.  This chapter is about dealing with these
limits in the context of gradient descent learning algorithms.  We
take as our baseline gradient descent learning algorithm a simple
linear predictor, which we typically train to minimize squared loss.
Nevertheless, we believe our findings with respect to these
limitations qualitatively apply to many other learning algorithms
operating according to gradient descent on large datasets.

Another substantial limit is imposed by label information---it's
difficult in general to cover the cost of labeling $10^9$ instances.
For large datasets relevant to this work, it's typically the case that
label information is derived in some automated fashion---for example a
canonical case is web advertisement where we might have $10^{10}$
advertisements displayed per day, of which some are clicked on and
some are not.

\section{Limits due to Bandwidth and Latency}

The bandwidth limit\myidx{bandwidth constraint} is well-illustrated by the Stochastic Gradient Descent
(SGD) implementation~\citep{Bottou08}. Leon Bottou released it as a
reference implementation along with a classification problem with 781K
instances and 60M total (non-unique) features derived from
RCV1~\citep{LewYanRosLi04}.  On this dataset, the SGD implementation
might take 20 seconds to load the dataset into memory and then learn a
strong predictor in 0.4 seconds.  This illustrates that the process of
loading the data from disk at 15MB/s is clearly the core bottleneck.

But even if that bottleneck were removed we would still be far from
peak performance: 0.4 seconds is about 100 times longer than expected
given the peak computational limits of a modern CPU.  A substantial
part of this slowdown is due to the nature of the data, which is
sparse.  With sparse features, each feature might incur
the latency\myidx{latency constraint} to access either cache or RAM (typically a 10x
penalty)\myidx{cache miss}, imposing many-cycle slowdowns on the computation.  Thus,
performance is sharply limited by bandwidth and latency constraints
which in combination slow down learning substantially.

Luckily, gradient descent style algorithms do not require loading all
data into memory.  Instead one data instance can be loaded, a model updated,
and then the instance discarded.  A basic question is: Can this be done
rapidly enough to be an effective strategy?  For example, a very
reasonable fear is that the process of loading and processing
instances one at a time induces too much latency, slowing the overall approach unacceptably.

The Vowpal Wabbit (VW)\myidx{Vowpal Wabbit} software~\citep{LanLiStr07} provides an existence
proof that it is possible to have a fast fully online implementation
which loads data as it learns.  On the dataset above, VW can load and
learn on the data simultaneously in about 3 seconds, an order of
magnitude faster than SGD.  A number of tricks are required to achieve
this, including a good choice of cache format, asynchronous parsing,
and pipelining of the computation.  A very substantial side benefit of
this style of learning is that we are no longer limited to datasets
which fit into memory.  A dataset can be either streamed from disk or
over the network, implying that the primary bottleneck is bandwidth,
and the learning algorithm can handle datasets with perhaps $10^{12}$
non-unique features in a few hours.

The large discrepancy between bandwidth and available computation
suggests that it should be possible to go beyond simple linear models
without a significant computational penalty: we can compute nonlinear
features of the data and build an extended linear model based on
those features. For instance, we may use the random kitchen sink
features~\citep{RahRec08} to obtain prediction performance comparable
with Gaussian RBF kernel classes. Furthermore, while general
polynomial features are computationally infeasible it is possible to
obtain features based on the outer product\myidx{outer product features} of two sets of features
efficiently by explicitly expanding such features on the fly. These
outer product features can model interaction\myidx{interaction features} between two sources of
information, for example the interaction of (query,result) feature
pairs is often relevant in internet search settings.

VW allows the implicit specification of these outer product features
via specification of the elements of the pairs.  The outer product
features thus need not be read from disk, implying that the disk
bandwidth limit is not imposed.  Instead, a new limit arises based on
random memory access latency and to a lesser extent on bandwidth
constraints. This allows us to perform computation in a space of up to
$10^{13}$ features with a throughput in the order of $10^8$
features/second.  Note that VW can additionally reduce the
dimensionality of each instance using feature
hashing~\citep{ShiPetDroLanetal09,Weinbergeretal09} \myidx{feature hashing}, which is essential
when the (expanded) feature space is large, perhaps even exceeding
memory size.  The core idea here is to use a hash function which
sometimes collides features.  The learning algorithm learns to deal
with these collisions, and the overall learning and evaluation process
happens much more efficiently due to substantial space savings.

This quantity remains up to two orders of magnitude below the
processing limit imposed by a modern CPU (we have up to 100 Flops
available per random memory access). This means that there is plenty
of room to use more sophisticated learning algorithms without
substantially slowing the learning process.  Nevertheless, it also
remains well below the size of the largest datasets, implying that our
pursuit of a very fast efficient algorithm is not yet complete.

To make matters more concrete assume we have datasets of 10TB size
(which is not uncommon for web applications). If we were to stream
this data from disk we cannot expect a data stream of more than
100MB/s per disk (high performance arrays might achieve up to 5x this
throughput, albeit often at a significant CPU utilization). This
implies that we need to wait at least $10^5$ seconds, i.e.\ 30 hours
to process this data on a single computer. This is assuming an optimal
learning algorithm which needs to see each instance only once and a
storage subsystem which is capable of delivering sustained peak
performance for over a day. Even with these unrealistic assumptions
this is often too slow.

\section{Parallelization Strategies}

Creating an online algorithm to process large amounts of data directly
limits the designs possible. In particular, it suggests decomposition
of the data either in terms of instances or in terms of features as
depicted in Figure~\ref{fig:sharding}. Decomposition in terms of
instances automatically reduces the load per computer since we only
need to process and store a fraction of the data on each computer. We
refer to this partitioning as ``instance sharding''\footnote{In the
  context of data, ``shard'' is typically used to define a partition
  without any particular structure other than size.}\myidx{instance sharding}.

An alternative is to decompose data in terms of its features.  While
it does \emph{not} reduce the number of \emph{instances} per computer,
it reduces the data per computer by reducing the number of features
associated with an instance for each computer, thus increasing the
potential throughput per computer.

\begin{figure}[tb]
\begin{center}
\includegraphics[width=0.6\textwidth]{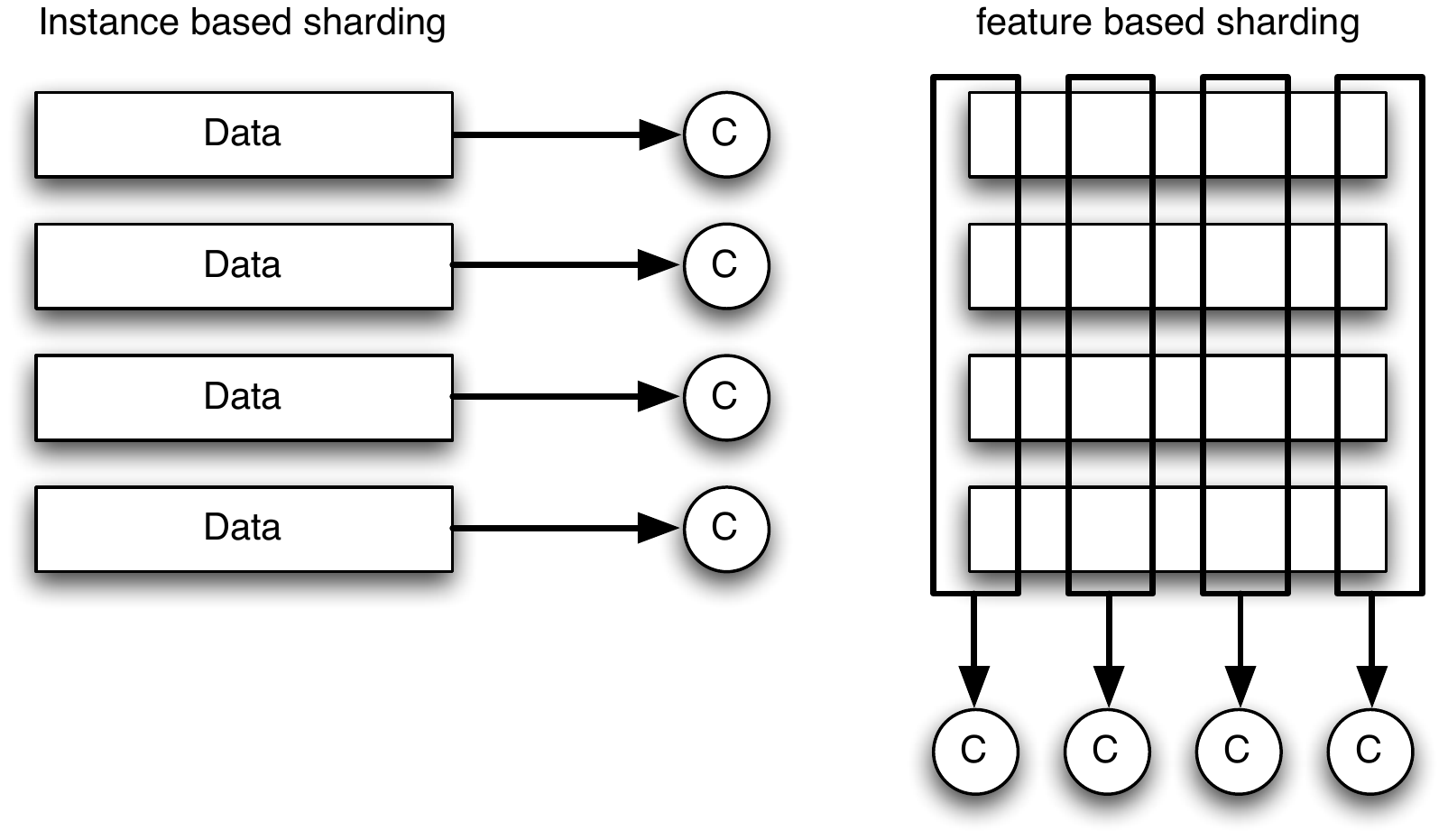}
\end{center}
\caption{Two approaches to data splitting. Left: instance shards,
  Right: feature shards. \label{fig:sharding}}
\end{figure}

A typical instance shard scheme runs the learning algorithm on each
shard, combines the results in some manner, and then runs a learning
algorithm again (perhaps with a different initialization) on each
piece of the data.  An extreme case of the instance shard approach is
given by parallelizing statistical query algorithms~\citep{CKLYBNO07}, which
compute statistics for various queries over the entire dataset, and
then update the learned model based on these queries, but there are
many other variants as well~\citep{MMMSW09,MHM10}.  The instance shard
approach has a great virtue---it's straightforward and easy to
program. 

A basic limitation of the instance shard approach is the
``combination'' operation which does not scale well with model
complexity.  When a predictor is iteratively built based on
statistics, it is easy enough to derive an aggregate statistic.  When
we use an online linear predictor for each instance shard, some
averaging or weighted averaging style of operation is provably
sensible.  However, when a nonlinear predictor is learned, it is
unclear how to combine the results.  Indeed, when a nonlinear
predictor has symmetries, and the symmetries are broken differently on
different instance shards, a simple averaging approach might cancel
the learning away.  An example of a symmetry is provided by a
two-layer neural network with 2 hidden nodes.  By swapping the weights
in the first hidden node with the weights of the second hidden node,
and similarly swapping the weights in the output node we can build a
representationally different predictor with identical predictions.  If
these two neural networks then have their weights averaged, the
resulting predictor can perform very poorly.

We have found a feature shard\myidx{feature sharding} approach more effective after the
(admittedly substantial) complexity of programming has been addressed.
The essential idea in a feature shard approach is that a learning
algorithm runs on a subset of the features of each instance, then the
predictions on each shard are combined to make an overall prediction
for each instance.  In effect, the parameters of the global model are
partitioned over different machines.  One simple reason why the
feature shard approach works well is due to caching effects---any
learned model is distributed across multiple nodes and hence better
fits into the cache of each node.  This combination process can be a
simple addition, or the predictions from each shard can be used as
features for a final prediction process, or the combination could even
be carried out in a hierarchical fashion.  After a prediction is made,
a gradient based update can be made to weights at each node in the
process.  Since we are concerned with datasets less than $10^{12}$ in
size, the bandwidth required to pass a few bytes per instance around is
not prohibitive.

One inevitable side effect of either the instance shard or the feature
shard approach is a delayed update\myidx{delayed updates}, as
explained below.  Let $m$ be the number of instances and $n$ be the
number of computation nodes.  In the instance shard approach the delay
factor is equal to $m/n$, because $m/n$ updates can occur before
information from a previously seen instance is incorporated into the
model.  With the feature shard approach, the latency is generally
smaller, but more dependent on the network architecture.  In the
asymptotic limit when keeping the bandwidth requirements of all nodes
constant, the latency grows as $O(\log(n))$ when the nodes are
arranged in a binary tree hierarchy; in this case, the prediction and
gradient computations are distributed in a divide-and-conquer fashion
and is completed in time proportional to the depth of the recursion,
which is $O(\log(n))$.  In the current implementation of VW, a maximum
latency of $2048$ instances is allowed.  It turns out that any delay
can degrade performance substantially, at least when instances arrive
adversarially, as we outline next.

\section{Delayed Update Analysis}

We have argued that both instance sharding and feature sharding
approaches require delayed updates in a parallel environment.  Here we
state some analysis of the impact of delay, as given by the delayed
gradient descent\myidx{delayed gradient descent} algorithm in Algorithm~\ref{alg:delay}.  We assume that
at time $t$ we observe some instance $x$ with associated label
$y$. Given the instance $x$ we generate some prediction
$\inner{w}{x}$.  Based on this we incur a loss $\ell(\inner{w}{x},y)$ such as
$\frac{1}{2}(y - \inner{w}{x})^2$.

Given this unified representation we consider the following
optimization algorithm template. It differs from
Algorithm~\ref{alg:GD} because the update is delayed by $\tau$ rounds.
This aspect models the delay due to the parallelization strategy for
implementing the gradient descent computation.

\begin{algorithm}[h]
  \caption{Delayed Gradient Descent \label{alg:delay}}
  \begin{algorithmic}
    \STATE {\bfseries Input:} loss function $l$, learning rate $\eta_t$ and delay $\tau \in
    \NN$
    \STATE {\bf initialize} for all $i\in\{1,\ldots,n\}$, weight$\,\,\,w_i := 0$ 
    \STATE Set $x_1 \ldots, x_\tau := 0$ and compute corresponding $g_t$ for $\ell(0,0)$.
    \FOR{$t = \tau + 1$ {\bfseries to} $T + \tau$}
    \STATE Get next feature vector $x\in \mathbb{R}^n$
    \STATE Compute prediction $\hat{y} := \inner{w}{x}$
    \STATE Get corresponding label $y$
    \STATE For $i \in \{1,\ldots,n\}$ compute gradient $g_{t,i} := \frac{\partial \ell(\hat{y},y)}{\partial w_i}$
    \STATE For $i \in \{1,\ldots,n\}$ update $w_i := w_i - \eta_t g_{t-\tau,i}$
    \ENDFOR
  \end{algorithmic}
\end{algorithm}

\subsection{Guarantees}

We focus on the impact of delay on the convergence rate of the weight
vector learned by the algorithm.  Convergence rate is a natural
performance criterion for online learning algorithms, as it
characterizes the trade-off between running time and learning accuracy
(measured specifically in number of instances versus error rate).

Introducing delay between data presentation and updates can lead to a
substantial increase in error rate. Consider the case where we have a
delay of $\tau$ between the time we see an instance and when we are
able to update $w$ based on the instance. If we are shown $\tau$
duplicates of the same data, i.e.\ $x_t, \ldots, x_{t+\tau-1} =
\bar{x}$ in sequence we have no chance of responding to $\bar{x}$ in
time and the algorithm cannot converge to the best weight vector any
faster than $\frac{1}{\tau}$ times the rate of an algorithm which is able to
respond instantly. Note that this holds even if we are told beforehand
that we will see the same instance $\tau$ times.

This simple reasoning shows that for an adversarially chosen sequence
of instances the regret (defined below) induced by a delay of $\tau$
can never be better than that of the equivalent no-delay algorithm whose
convergence speed is reduced by a factor of $\frac{1}{\tau}$. It turns
out that these are the very rates we are able to obtain in the
adversarial setting. On the other hand, in the non-adversarial
setting, we are able to obtain rates which match those of no-delay
algorithms, albeit with a sizable additive constant which depends on
the delay.

The guarantees we provide are formulated in terms of a
\emph{regret}\myidx{regret}, i.e.\ as a discrepancy relative to the
best possible solution $w^*$ defined with knowledge of all
events. Formally, we measure the performance of the algorithm in terms
of
\begin{align}
  \label{eq:regret}
  \mbox{Reg}[\underbrace{w_1, \ldots, w_T}_{=: W}] & := \sum_{t=1}^T \left[
  \ell(\hat{y}_t, y_t) - \ell(\hat{y}^*_t, y_t) \right] \\
  & \text{ where }
  y^*_t = \inner{x_t}{\argmin_{w} \sum_{t'=1}^T \ell(\hat{y}, y_{t'})}
  \nonumber
\end{align}

\begin{theorem}[Worst case guarantees for delayed updates; \citealp{LanSmoZin09}]
  \label{th:delay}
  If $\nbr{w^*} \leq R^2$ and
  the norms of the gradients  $\nabla_w \ell(\inner{w}{x},y)$ are bounded by
  $L$, then
  \begin{equation} \label{eq:delay}
    \mbox{Reg}[W] \leq 4 R L \sqrt{\tau T}
  \end{equation}
  when we choose the learning rate $\eta_t = \frac{R}{L \sqrt{2 \tau t}}$. 
  If, in addition, $\ell(\inner{w}{x},y)$ is strongly convex with modulus of
  convexity $c$ we obtain the guarantee
  \begin{equation*}
    \mbox{Reg}[W] \leq \frac{L^2}{c} \sbr{\tau + 0.5} \log T + C(\tau,L,c)
  \end{equation*}
  with learning rate $\eta_t = \frac{1}{c(t-\tau)}$, where $C$ is a
  function independent of $T$.
\end{theorem}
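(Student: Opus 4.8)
The plan is to carry out the standard potential-function analysis of online gradient descent, but augmented with one extra error term that measures how \emph{stale} the applied gradient is relative to the weight at which it was computed; controlling that term is exactly what produces the $\sqrt{\tau}$ inflation of the rate. Write $w_t$ for the weight vector used to predict on the instance processed at round $t$, so that $\hat y_t = \inner{w_t}{x_t}$, $g_t = \nabla_w\ell(\inner{w_t}{x_t},y_t)$, and the delayed update is $w_{t+1} = w_t - \eta_t\,g_{t-\tau}$. Expanding $\|w_{t+1}-w^*\|^2$ and rearranging yields the identity
\[
\inner{g_{t-\tau}}{w_t - w^*} \;=\; \frac{\|w_t - w^*\|^2 - \|w_{t+1} - w^*\|^2}{2\eta_t} \;+\; \frac{\eta_t}{2}\,\|g_{t-\tau}\|^2 .
\]

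Next I would invoke convexity of $\ell(\cdot,y)$: the per-round regret charged to instance $x_{t-\tau}$ is at most $\inner{g_{t-\tau}}{w_{t-\tau}-w^*}$, which is \emph{not} the quantity in the identity because $g_{t-\tau}$ was evaluated at $w_{t-\tau}$ but applied at $w_t$. The remedy is the decomposition
\[
\inner{g_{t-\tau}}{w_{t-\tau}-w^*} \;=\; \inner{g_{t-\tau}}{w_t - w^*} \;+\; \inner{g_{t-\tau}}{w_{t-\tau}-w_t},
\]
and then $w_{t-\tau}-w_t = \sum_{s=t-\tau}^{t-1}\eta_s\,g_{s-\tau}$, so by Cauchy--Schwarz and $\|g\|\le L$ the staleness term is at most $L^2\sum_{s=t-\tau}^{t-1}\eta_s$. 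Summing over rounds: the identity telescopes into a quantity governed by $R$ and $1/\eta_T$ (using that $(\eta_t)$ is nonincreasing), the curvature-free term gives $\tfrac{L^2}{2}\sum_t\eta_t$, and the staleness term totals $\sum_t L^2\sum_{s=t-\tau}^{t-1}\eta_s \le \tau L^2\sum_t\eta_t$ since each $\eta_s$ is counted at most $\tau$ times. With $\eta_t = R/(L\sqrt{2\tau t})$ and $\sum_{t\le T}t^{-1/2}\le 2\sqrt T$, all three pieces are $O(RL\sqrt{\tau T})$, and tracking constants gives the stated $4RL\sqrt{\tau T}$.

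For the strongly convex refinement I would replace convexity by $\ell(\hat y_{t-\tau},y_{t-\tau}) - \ell(\hat y^*_{t-\tau},y_{t-\tau}) \le \inner{g_{t-\tau}}{w_{t-\tau}-w^*} - \tfrac c2\|w_{t-\tau}-w^*\|^2$. The new negative quadratic term is engineered to cancel against the telescoping coefficient $\tfrac12(\eta_t^{-1}-\eta_{t-1}^{-1}) = \tfrac c2$ arising from $\eta_t = 1/(c(t-\tau))$, so the main sum collapses and only $\tfrac{L^2}{2}\sum_t\eta_t + \tau L^2\sum_t\eta_t = \tfrac{L^2}{c}(\tau+\tfrac12)\sum_t\tfrac{1}{t-\tau}$ survives, which is $\tfrac{L^2}{c}(\tau+\tfrac12)\log T$ up to lower-order terms.

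The main obstacle is the bookkeeping at the ends of the horizon: for the first $\tau$ (resp. $2\tau$) rounds the recursion for $w_{t-\tau}$ is not yet in force (the algorithm pads with $x_1\ldots x_\tau := 0$ and the corresponding $\ell(0,0)$ gradients), and in the strongly convex case $\eta_t = 1/(c(t-\tau))$ is ill-behaved for $t$ near $\tau$; those $O(\tau)$ rounds must be peeled off and absorbed into $C(\tau,L,c)$. A secondary subtlety, in this unconstrained setting, is justifying the telescoping step without a diameter bound --- one keeps $\|w_t-w^*\|^2$ only where it multiplies a nonnegative coefficient and uses the radius bound on $w^*$ itself --- and one must verify that the staleness term is genuinely $\Theta(\tau L^2\sum_t\eta_t)$ (not larger), since it is precisely this linear-in-$\tau$ blowup that becomes the $\sqrt{\tau}$ factor after optimizing $\eta_t$.
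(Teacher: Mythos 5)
The chapter does not actually prove Theorem~\ref{th:delay}: it is imported verbatim from the cited reference (Langford, Smola and Zinkevich, 2009), so there is no in-paper proof to compare against. Your proposal reconstructs essentially the argument of that reference, and the skeleton is right: bound the linearized regret $\inner{g_{t-\tau}}{w_{t-\tau}-w^*}$ by splitting off the staleness term $\inner{g_{t-\tau}}{w_{t-\tau}-w_t}$, control the latter by $L^2\sum_{s=t-\tau}^{t-1}\eta_s$ using the update recursion, and observe that summing over $t$ counts each $\eta_s$ at most $\tau$ times, which is exactly where the linear-in-$\tau$ inflation (hence the $\sqrt{\tau}$ in the rate after tuning $\eta_t$) comes from. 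Your constant check also works out: with $\eta_t=R/(L\sqrt{2\tau t})$ the three pieces sum to roughly $\tfrac{3}{\sqrt2}RL\sqrt{\tau T}\le 4RL\sqrt{\tau T}$.

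Two gaps remain, one of which is more than bookkeeping. First, your treatment of the telescoping sum in the unconstrained setting is inverted: after Abel summation the coefficients $\tfrac{1}{2\eta_t}-\tfrac{1}{2\eta_{t-1}}$ are \emph{nonnegative} (since $\eta_t$ decreases) and multiply $\|w_t-w^*\|^2$, so these are precisely the terms you cannot drop and cannot bound by $\|w^*\|$ alone; closing this requires either a projection onto a ball of radius $O(R)$ at each step (as in the cited reference) or a separate argument that the iterates stay bounded. Second, in the strongly convex case the negative quadratic supplied by strong convexity is $-\tfrac{c}{2}\|w_{t-\tau}-w^*\|^2$ while the telescoping coefficient multiplies $\|w_t-w^*\|^2$; these do not cancel directly, and reconciling them costs an additional delay-dependent term (bounded via $\|w_t-w_{t-\tau}\|\le L\sum_{s=t-\tau}^{t-1}\eta_s$) that must be tracked into $C(\tau,L,c)$ rather than assumed away. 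Both issues are resolved in the cited paper, but as written your argument does not close them.
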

\noindent
In other words the average error of the algorithm (as normalized by
the number of seen instances) converges at rate $O(\sqrt{\tau/T})$
whenever the loss gradients are bounded and at rate $O(\tau \log T /
T)$ whenever the loss function is strongly convex. This is exactly
what we would expect in the worst case: an adversary may reorder
instances so as to maximally slow down progress. In this case a
parallel algorithm is no faster than a sequential code. While such
extreme cases hardly occur in practice, we have observed
experimentally that for sequentially correlated instances, delays can
rapidly degrade learning.

If subsequent instances are only weakly correlated or IID, it is
possible to prove tighter bounds where the delay does not directly
harm the update~\citep{LanSmoZin09}.  The basic structure of these
bounds is that they have a large delay-dependent initial regret after
which the optimization essentially degenerates into an averaging
process for which delay is immaterial.  These bounds have many
details, but a very crude alternate form of analysis can be done using
sample complexity bounds.  In particular, if we have a set $H$ of
predictors and at each timestep $t$ choose the best predictor on the
first $t-\tau$ timesteps, we can bound the regret to the best
predictor $h$ according to the following:
\begin{theorem}[IID Case for delayed updates]   \label{th:delay2}
  If all losses are in $\{0,1\}$, for all IID data distributions $D$
  over features and labels, for any $\delta$ in $(0,1)$, with probability
  $1-\delta$
  \begin{align}
    \label{eq:delay2}
    \min_{h\in H} \sum_{t=1}^T \left[ \ell(h(x_t),y_t) - \ell(h_t(x_t),y_t) \right] \leq \tau + \sqrt{T \ln \frac{3|H|T}{\delta}} + \sqrt{\frac{T \ln \frac{3}{\delta}}{2}}
  \end{align}
\end{theorem}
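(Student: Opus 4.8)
\medskip\noindent\emph{Proof strategy.}
The plan is to bound the left‑hand side by comparing the chosen predictors $h_t$ not to the in‑hindsight empirical minimizer but to the single population‑optimal benchmark $h^* := \argmin_{h\in H} L(h)$, where $L(h) := \E_{(x,y)\sim D}[\ell(h(x),y)]$. Since $\min_{h\in H}\sum_{t=1}^T\ell(h(x_t),y_t) \le \sum_{t=1}^T\ell(h^*(x_t),y_t)$, it is enough to control $\sum_{t=1}^T\bigl[\ell(h^*(x_t),y_t)-\ell(h_t(x_t),y_t)\bigr]$. The first $\tau$ rounds, in which $h_t$ is chosen before any data has been used, each contribute at most $1$ because the losses lie in $\{0,1\}$; this accounts for the additive $\tau$. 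For $t>\tau$, where $h_t$ is the empirical risk minimizer on the first $t-\tau$ examples, I would insert the population risks,
\[
\ell(h^*(x_t),y_t)-\ell(h_t(x_t),y_t) = \bigl[\ell(h^*(x_t),y_t)-L(h^*)\bigr] + \bigl[L(h^*)-L(h_t)\bigr] + \bigl[L(h_t)-\ell(h_t(x_t),y_t)\bigr],
\]
and discard the middle bracket, which is nonpositive by optimality of $h^*$.

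Summed over $t \le T$, the first bracket is a sum of i.i.d.\ mean‑zero random variables in $[-1,1]$, so a one‑sided Hoeffding inequality at confidence $1-\delta/3$ gives $\sum_t[\ell(h^*(x_t),y_t)-L(h^*)] \le \sqrt{\tfrac12 T\ln(3/\delta)}$, which is precisely the third term of the claimed bound. What remains is $\sum_{t>\tau}[L(h_t)-\ell(h_t(x_t),y_t)]$ — the discrepancy between the population risk of each leader and its realized single‑round loss — from which I must extract the term $\sqrt{T\ln(3|H|T/\delta)}$.

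This last sum is the crux, and the obstacle is that the hypothesis being evaluated changes every round: one cannot simply union‑bound a per‑round Hoeffding inequality over $H$, because the worst‑case single‑round deviation $\max_{h\in H}\bigl(L(h)-\ell(h(x_t),y_t)\bigr)$ is typically $\Theta(1)$ and would accumulate to $\Theta(T)$. The clean resolution is that $h_t$ depends only on the first $t-\tau$ examples, hence is independent of $(x_t,y_t)$, so $\bigl(L(h_t)-\ell(h_t(x_t),y_t)\bigr)_{t>\tau}$ is a bounded martingale difference sequence whose partial sums Azuma--Hoeffding controls by $O(\sqrt{T\ln(1/\delta)})$. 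The more elementary route indicated by the authors' remark about sample complexity bounds is instead to control the excess risk of each leader by a uniform deviation inequality over $H$, taken simultaneously at all $\le T$ admissible sample sizes $m=t-\tau$: with probability $1-2\delta/3$, $|\widehat L_m(h)-L(h)| \le \sqrt{\ln(3|H|T/\delta)/(2m)}$ for every $h\in H$ and every $m\le T$, where $\widehat L_m$ denotes the empirical risk on the first $m$ examples; since $h_t$ minimizes $\widehat L_{t-\tau}$, this yields $L(h_t)-L(h^*) \le 2\sqrt{\ln(3|H|T/\delta)/(2(t-\tau))}$ uniformly in $t$, which is exactly where $|H|$ and $T$ enter the logarithm, and summing $\sum_{m\ge 1}m^{-1/2}\le 2\sqrt{T}$ collapses the per‑round terms into a single $\sqrt{T\ln(3|H|T/\delta)}$‑type contribution. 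Combining the burn‑in $\tau$, the Hoeffding term, and this uniform‑convergence term, and apportioning the failure probability as three pieces of $\delta/3$, yields the stated inequality; the remaining steps are routine estimates, chiefly the arithmetic‑series bound and the bookkeeping of constants.
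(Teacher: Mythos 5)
Your proposal is correct in substance and follows essentially the same route as the paper's proof: a Hoeffding bound on the empirical risk at each admissible sample size $m=t-\tau$, a union bound over $H$ and over the $T$ rounds (which is exactly where the $\ln\frac{3|H|T}{\delta}$ comes from), the resulting $O\bigl(\sqrt{\ln(3|H|T/\delta)/(t-\tau)}\bigr)$ excess risk of the empirical minimizer summed via $\sum_{m\le T} m^{-1/2}\le 2\sqrt{T}$, the additive $\tau$ for the burn-in rounds, and one further application of Hoeffding to pass from expected to realized losses. The one wrinkle is internal to your bookkeeping: having discarded the middle bracket $L(h^*)-L(h_t)\le 0$, the term you still owe is $\sum_{t>\tau}\bigl[L(h_t)-\ell(h_t(x_t),y_t)\bigr]$, yet your ``elementary route'' bounds the excess risk $L(h_t)-L(h^*)$ --- i.e.\ precisely the bracket you just discarded --- rather than that leftover fluctuation. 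The consistent assembly is to keep all three brackets: control the excess-risk bracket by the uniform deviation inequality (this is the step the paper actually performs and the only place $|H|$ is needed), and control the two fluctuation brackets by Hoeffding and by Azuma--Hoeffding respectively, the latter justified exactly by your observation that $h_t$ depends only on the first $t-\tau$ examples and is therefore independent of $(x_t,y_t)$. The discrepancy arises because the paper's displayed regret has its sign reversed relative to the usual convention; read literally it permits discarding the excess-risk term (in which case your Azuma route alone suffices and $|H|$ never enters), while the intended statement requires it. Your constant-factor looseness in the summation step mirrors the paper's own.
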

\begin{proof}
The proof is a straightforward application of the Hoeffding bound.  At
every timestep $t$, we have $t-\tau$ labeled data instances.  Applying the
Hoeffding bound for every hypothesis $h$, we have that with
probability $2\delta/3|H|T$, $\left| \frac{1}{t-\tau}\sum_{i=1}^{t-\tau}
\ell(h(x_t),y_t) - E_{(x,y)\sim D} \ell(h(x),y) \right| \leq \sqrt{\frac{\ln
    3|H|T/\delta}{2(t-\tau)}}$.  Applying a union bound over all
hypotheses and timesteps implies the same holds with probability at
least $2\delta/3$. The algorithm which chooses the best predictor in
hindsight therefore chooses a predictor with expected loss at most
$\sqrt{\frac{2\ln 3|H|T/\delta}{t-\tau}}$ worse than the best.  Summing
over $T$ timesteps, we get: $\tau + \sqrt{2 \ln 3|H|T/\delta}
\sum_{t=1}^{T-\tau} \frac{1}{\sqrt{t}} \leq \tau + \sqrt{2T \ln
  3|H|T/\delta}$.  This is a bound on an expected regret.  To get a
bound on the actual regret, we can simply apply a Hoeffding bound
again yielding the theorem result.
\end{proof}

\section{Parallel Learning Algorithms}

We have argued that delay is generally bad when doing online learning
(at least in an adversarial setting), and that it is also unavoidable
when parallelizing.  This places us in a bind: How can we create an
effective parallel online learning algorithm?  We'll discuss two
approaches based on multicore and multinode parallelism.

\subsection{Multicore Feature Sharding}

A multicore processor\myidx{multicore processor} consists of multiple CPUs which operate
asynchronously in a shared memory space.  It should be understood that
because multicore parallelization\myidx{multicore parallelization} does not address the primary
bandwidth bottleneck, its usefulness is effectively limited to those
datasets and learning algorithms which require substantial computation
per raw instance used.  In the current implementation, this implies the
use of feature pairing, but there are many learning algorithms more
complex than linear prediction where this trait may also hold.

The first version of Vowpal Wabbit used an instance sharding approach
for multicore learning, where the set of weights and the instance
source was shared between multiple identical threads which each parsed
the instance, made a prediction, and then did an update to the
weights.  This approach was effective for two threads yielding a near
factor-of-2 speedup since parsing of instances required substantial
work.  However, experiments with more threads on more cores yielded no
further speedups due to lock contention.  Before moving on to a
feature sharding approach, we also experimented with a dangerous
parallel programming technique: running with multiple threads that
\emph{do not} lock the weight vector.  This did yield improved speed,
but at a cost in reduced learning rate and nondeterminism which was
unacceptable.

The current implementation of Vowpal Wabbit uses an asynchronous parsing
thread which prepares instances into just the right format for learning
threads, each of which computes a sparse-dense vector product on a
disjoint subset of the features.  The last thread completing this
sparse-dense vector product adds together the results and computes an
update which is then sent to all learning threads to update their
weights, and then the process repeats.  Aside from index definition
related to the core hashing
representation~\citep{ShiPetDroLanetal09,Weinbergeretal09} Vowpal
Wabbit employs, the resulting algorithm is identical to the single
thread implementation.  It should be noted that although processing of
instances is fully synchronous there is a small amount of
nondeterminism between runs due to order-of-addition ambiguities
between threads.  In all our tests, this method of multicore
parallelization yielded virtually identical prediction performance
with negligible overhead compared to non-threaded code and sometimes
substantial speedups.  For example, with 4 learning threads, about a
factor of 3 speedup is observed.

We anticipate that this approach to multicore parallelization will not
scale to large numbers of cores, because the very tight coupling of
parallelization requires low latency between the different cores.
Instead, we believe that multinode parallelization techniques will
ultimately need to be used for multicore parallelization, motivating the next section.

\subsection{Multinode Feature Sharding}

The primary distinction between multicore and multinode
parallelization\myidx{multinode parallelization} is latency, with the latency between nodes many orders
of magnitude larger than for cores.  In particular, the latency
between nodes is commonly much larger than the time to process an
individual instance, implying that any per-instance blocking operation,
as was used for multicore parallelization, is unacceptable.

This latency also implies a many-instance delayed update which, as we
have argued, incurs a risk of substantially degrading performance.  In
an experiment to avoid this risk, we investigated the use of updates
based on information available to only one node in the computation,
where there is \emph{no} delay.  Somewhat surprisingly, this worked
better than our original predictor.

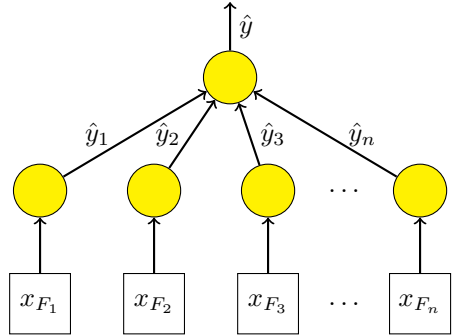
\begin{figure}
\begin{center}
  \begin{tikzpicture}
    \node[var] (a) at (0,0) {$x_{F_1}$};
    \node[var] (b) at (1.5,0) {$x_{F_2}$};
    \node[var] (c) at (3,0) {$x_{F_3}$};
    \node[clear] (d) at (4,0) {$\ldots$};
    \node[var] (e) at (5,0) {$x_{F_n}$};
    \node[proc] (1) at (0,1.5) {};
    \node[proc] (2) at (1.5,1.5) {};
    \node[proc] (3) at (3,1.5) {};
    \node[clear] (4) at (4,1.5) {$\ldots$};
    \node[proc] (5) at (5,1.5) {};
    \node[proc] (6) at (2.5,3) {};
    \draw[->,style=thick] (a) -- (1);
    \draw[->,style=thick] (b) -- (2);
    \draw[->,style=thick] (c) -- (3);
    \draw[->,style=thick] (e) -- (5);
    \draw[->,style=thick] (1) -- (6) node [black,midway,xshift=-14pt] {$\hat{y}_1$};
    \draw[->,style=thick] (2) -- (6) node [black,midway,xshift=-9pt] {$\hat{y}_2$};
    \draw[->,style=thick] (3) -- (6) node [black,midway,xshift=9pt] {$\hat{y}_3$};
    \draw[->,style=thick] (5) -- (6) node [black,midway,xshift=14pt] {$\hat{y}_n$};
    \draw[->,style=thick] (6.north) -- (2.5,4) node [black,midway,xshift=6pt] {$\hat{y}$};
  \end{tikzpicture}
\end{center}
\caption{Architecture for no-delay multinode feature sharding.}
\label{fig:multinode}
\end{figure}

\subsubsection{Tree Architectures}

Our strategy is to employ feature sharding across several nodes, each
of which updates its parameters online as a single-node learning algorithm
would.
So, ignoring the overhead due to creating and distributing the feature
shards (which can be minimized by reorganizing the dataset), we have so
far fully decoupled the computation.
The issue now is that we have $n$ independent predictors each using just a
subset of the features (where $n$ is the number of feature shards), rather
than a single predictor utilizing all of the features.
We reconcile this in the following manner:
(i) we require that each of these nodes compute and transmit a prediction
to a master node\myidx{master node} after receiving each new instance (but before updating its
parameters); and (ii) we use the master node to treat these $n$ predictions
as features, from which the master node learns to predict the label in an
otherwise symmetric manner.
Note that the master node must also receive the label corresponding to each
instance, but this can be handled in various ways with minimal overhead
(e.g., it can be piggybacked with one of the subordinate node's\myidx{subordinate node}
predictions).
The end result, illustrated in Figure~\ref{fig:multinode}, is a two-layer
architecture for online learning with reduced latency at each node and no
delay in parameter updates.

Naturally, the strategy described above can be iterated to create
multi-layered architectures that further reduce the latency at each
node.  At the extreme, the architecture becomes a (full) binary tree:
each leaf node (at the bottom layer) predicts using just a single
feature, and each internal node predicts using the predictions of two
subordinate nodes in the next lower layer as features (see
Figure~\ref{fig:hier-arch}).
Note that each internal node may incur delay proportional to its fan-in
(in-degree), so
reducing fan-in is desirable; however, this comes at the cost of increased
depth and thus prediction latency.
Therefore, in practice the actual architecture that is deployed may be
somewhere in between the binary tree and the two-layer scheme.
Nevertheless, we will study the binary tree structure further because it
illustrates the distinctions relative to a simple linear prediction
architecture.

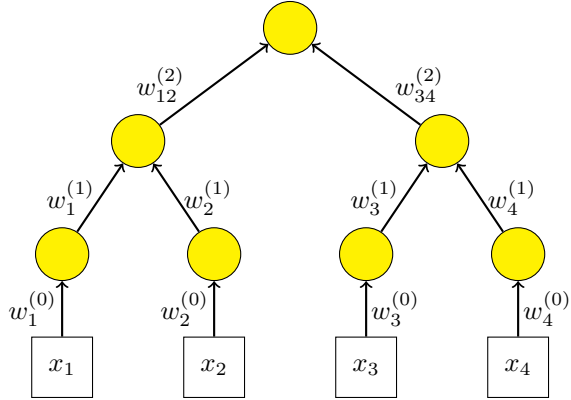
\begin{figure}
\begin{center}
  \begin{tikzpicture}
    \node[var] (a) at (0,0) {$x_1$};
    \node[var] (b) at (2,0) {$x_2$};
    \node[var] (c) at (4,0) {$x_3$};
    \node[var] (d) at (6,0) {$x_4$};
    \node[proc] (1) at (0,1.5) {};
    \node[proc] (2) at (2,1.5) {};
    \node[proc] (3) at (4,1.5) {};
    \node[proc] (4) at (6,1.5) {};
    \node[proc] (5) at (1,3) {};
    \node[proc] (6) at (5,3) {};
    \node[proc] (7) at (3,4.5) {};
    \draw[->,style=thick] (a) -- (1) node [black,midway,xshift=-11pt] {$w_1^{(0)}$};
    \draw[->,style=thick] (b) -- (2) node [black,midway,xshift=-11pt] {$w_2^{(0)}$};
    \draw[->,style=thick] (c) -- (3) node [black,midway,xshift=11pt] {$w_3^{(0)}$};
    \draw[->,style=thick] (d) -- (4) node [black,midway,xshift=11pt] {$w_4^{(0)}$};
    \draw[->,style=thick] (1) -- (5) node [black,midway,xshift=-11pt] {$w_1^{(1)}$};
    \draw[->,style=thick] (2) -- (5) node [black,midway,xshift=12pt] {$w_2^{(1)}$};
    \draw[->,style=thick] (3) -- (6) node [black,midway,xshift=-11pt] {$w_3^{(1)}$};
    \draw[->,style=thick] (4) -- (6) node [black,midway,xshift=12pt] {$w_4^{(1)}$};
    \draw[->,style=thick] (5) -- (7) node [black,midway,xshift=-20pt] {$w_{12}^{(2)}$};
    \draw[->,style=thick] (6) -- (7) node [black,midway,xshift=20pt] {$w_{34}^{(2)}$};
  \end{tikzpicture}
\end{center}
\caption{Hierarchical architecture for no-delay multinode feature sharding.
Each edge is associated with a weight learned by the node at the arrow
head.}
\label{fig:hier-arch}
\end{figure}

\subsubsection{Convergence Time vs Representation Power}

The price of the speed-up that comes with the no-delay approach (even
with the two-layer architecture) is paid in representation power\myidx{representation power}.
That is, the no-delay approach learns restricted forms of linear predictors
relative to what can be learned by ordinary (delayed) gradient descent.
To illustrate this, we compare the resulting predictors from the no-delay
approach with the binary tree architecture and the single-node linear
architecture.
Let $x = (x_1,\ldots,x_n) \in \R^n$ be a random vector
(note that the subscripts now index the features) and $y \in \R$ be a
random variable.  Gradient descent using a linear architecture
converges toward the least-squares linear predictor of $y$ from $x$,
i.e.,
\[ w^*
\ := \ \arg\min_{w \in \R^n} \E\left[\frac12(\inner{x}{w} - y)^2\right]
\ = \ \Sig^{-1} b \in \R^n \]
where
\[ \Sig \ := \ \E[xx^\top] \in \R^{n \times n}
\quad \text{and} \quad
b \ := \ \E[xy] \in \R^n
,
\]
in time roughly linear in the number of features $n$~\citep{KivWar95}.

The gradient descent strategy with the binary tree architecture, on
the other hand, learns weights locally at each node; the weights at
each node therefore converge to weights that are locally optimal for
the input features supplied to the node.
The final predictor is linear in the input features but can differ from the
least-squares solution.
To see this, note first that the leaf nodes learn weights $w_1^{(0)},
\ldots, w_n^{(0)}$, where
\[ w_i^{(0)} := \frac{b_i}{\Sig_{i,i}} \in \R. \]
Then, the $(k+1)$th layer of nodes learns weights from the predictions of
the $k$th layer; recursively, a node whose input features are the
predictions of the $i$th and $j$th nodes from layer $k$ learns the weights
$(w_i^{(k+1)}, w_j^{(k+1)}) \in \R^2$.
By induction, the prediction of the $i$th node in layer $k$ is linear in
the subset $S_i$ of variables that are descendants of this node in the
binary tree.
Let $w_{S_i}^{(k)} \in \R^{|S_i|}$ denote these weights and $x_{S_i} \in
\R^{|S_i|}$ denote the corresponding feature vector.
Then $(w_i^{(k+1)}, w_j^{(k+1)}) \in \R^2$ can be expressed as
\begin{eqnarray*}
\left[\!\!\begin{array}{c}
w_i^{(k+1)} \\
w_j^{(k+1)}
\end{array}\!\!\right]
& = &
\left[\!\!\begin{array}{cc}
\inner{w_{S_i}^{(k)}}{\Sig_{S_i,S_i} {w_{S_i}^{(k)}}}
& \inner{w_{S_i}^{(k)}}{\Sig_{S_i,S_j} {w_{S_j}^{(k)}}}
\\
\inner{w_{S_j}^{(k)}}{\Sig_{S_j,S_i} {w_{S_i}^{(k)}}}
& \inner{w_{S_j}^{(k)}}{\Sig_{S_j,S_j} {w_{S_j}^{(k)}}}
\end{array}\!\!\right]^{-1}
\left[\!\!\begin{array}{c}
\inner{w_{S_i}^{(k)}}{b_{S_i}} \\
\inner{w_{S_j}^{(k)}}{b_{S_j}}
\end{array}\!\!\right]
\end{eqnarray*}
where $\Sig_{S_i,S_j} = \E[x_{S_i} x_{S_j}^\top]$ and $b_{S_i} =
\E[x_{S_i}y]$.
Then, the prediction at this particular node in layer $k+1$ is
\[ w_i^{(k+1)}
\inner{w_{S_i}^{(k)}}{x_{S_i}} + w_j^{(k+1)}
\inner{w_{S_j}^{(k)}}{x_{S_j}}, \]
which is linear in $(x_{S_i},x_{S_j})$.
Therefore, the overall prediction is linear in $x$, with the weight
attached to $x_i$ being a product of weights at the different levels.
However, these weights can differ significantly from $w^*$ when the
features are highly correlated, as the tree architecture only ever
considers correlations between (say) $x_{S_i}$ and $x_{S_j}$ through the
scalar summary $\inner{w_{S_i}^{(k)}}{\Sig_{S_i,S_j} {w_{S_j}^{(k)}}}$.
Thus, the representational expressiveness of the binary tree architecture
is constrained by the local training\myidx{local training} strategy.

The tree predictor can represent solutions with complexities between
Na\"ive Bayes\myidx{Naive Bayes@Na\"ive Bayes} and a linear predictor.  Na\"ive Bayes learns weights
identical to the bottom layer of the binary tree, but stops there and
combines the $n$ individual predictions with a trivial sum: $w_1^{(0)}
x_1 + \ldots + w_n^{(0)} x_n$.  The advantage of Na\"ive Bayes is its
convergence time\myidx{convergence time}: because the weights are learned independently, a
union bound argument implies convergence in just $O(\log n)$ time,
which is exponentially faster than the $O(n)$ convergence time using
the linear architecture!

The convergence time of gradient descent with the binary tree
architecture is roughly $O(\log^2 n)$.  To see this, note that the
$k$th layer converges in roughly $O(\log (n/2^k))$ time since there
are $n/2^k$ parameters that need to converge, plus the time for the
$(k-1)$th layer to converge.  Inductively, this is $O(\log n +
\log(n/2) + \ldots + \log(n/2^k)) = O(k \log n)$.  Thus, all of the
weights have converged by the time the final layer ($k = \log_2 n$)
converges; this gives an overall convergence time of $O(\log^2 n)$.
This is slightly slower than Na\"ive Bayes, but still significantly
faster than the single-node linear architecture.

The advantage of the binary tree architecture over Na\"ive Bayes is that it
can account for variability in the prediction power of various feature
shards, as the following result demonstrates.
\begin{proposition} \label{prop:nb-fails}
There exists a data distribution for which the binary tree architecture can
represent the least-squares linear predictor but Na\"ive Bayes cannot.
\end{proposition}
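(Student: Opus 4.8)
The plan is to exhibit an explicit distribution on $n = 2$ features and compute directly what each architecture learns. I would take $x = (x_1, x_2) \in \R^2$ to be zero-mean (say jointly Gaussian) with
\[
  \Sig \ = \ \E[xx^\top] \ = \ \left[\begin{array}{cc} 1 & 1/2 \\ 1/2 & 1 \end{array}\right]
\]
and set $y := x_1$. Then $b = \E[xy] = (1, 1/2)^\top$, the matrix $\Sig$ is nonsingular, and the least-squares linear predictor is $w^* = \Sig^{-1} b = (1,0)^\top$, i.e.\ $\inner{x}{w^*} = x_1$. (Replacing $y$ by $x_1 + \varepsilon$ with $\varepsilon$ independent of $x$ leaves $b$ and $w^*$ unchanged, so the noiseless version costs no generality.)

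First I would rule out Na\"ive Bayes. Since its output is the \emph{fixed} rule $w_1^{(0)} x_1 + w_2^{(0)} x_2$ with $w_i^{(0)} = b_i / \Sig_{i,i}$, it can only produce $x_1 + \tfrac12 x_2$. For this to equal $\inner{x}{w^*} = x_1$ as a random variable we would need $x_2 = 0$ almost surely, contradicting $\Sig_{2,2} = 1$; hence Na\"ive Bayes cannot represent $w^*$.

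Next I would verify that the binary tree can. The two leaves learn $w_1^{(0)} = b_1/\Sig_{1,1} = 1$ and $w_2^{(0)} = b_2/\Sig_{2,2} = 1/2$, feeding the predictions $p_1 = x_1$ and $p_2 = \tfrac12 x_2$ to the root. The crucial observation is that both leaf weights are nonzero, so $\mathrm{span}\{p_1,p_2\} = \mathrm{span}\{x_1,x_2\}$; consequently the least-squares fit of $y$ on $(p_1,p_2)$ performed at the root has the same fitted values as the least-squares fit of $y$ on $(x_1,x_2)$, namely $\inner{x}{w^*} = x_1$. To make this concrete I would instantiate the $2\times2$ node formula displayed above with $S_1 = \{1\}$ and $S_2 = \{2\}$: the root matrix becomes $\left[\begin{array}{cc} 1 & 1/4 \\ 1/4 & 1/4 \end{array}\right]$ and the right-hand side $(1, 1/4)^\top$, whose solution is $(w_1^{(1)}, w_2^{(1)}) = (1,0)$, so the root outputs $1\cdot x_1 + 0 \cdot \tfrac12 x_2 = x_1 = \inner{x}{w^*}$. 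This settles the proposition for $n=2$; if a larger feature count is wanted, I would pad with extra zero-mean features independent of $(x_1,x_2,y)$, which leaves both conclusions intact.

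The construction is short and the computations are routine; the single point that requires care — and the only place the argument could fail — is ensuring that each leaf weight $w_i^{(0)} = b_i/\Sig_{i,i}$ is nonzero, since a vanishing leaf weight would let the root's rescaling discard rather than recover that feature's information. This is exactly why I choose $y$ so that $b_i = \mathrm{Cov}(x_i,y) \neq 0$ for each $i$. The complementary point, that the Na\"ive Bayes and least-squares predictors really are distinct random variables rather than merely coinciding on a measure-zero set, is guaranteed by taking $\Sig$ nonsingular with $\Sig_{1,2} \neq 0$.
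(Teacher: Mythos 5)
Your construction is correct, and it is genuinely different from the paper's. The paper exhibits a uniform distribution over four concrete points in $\R^3$ for which the (slightly unbalanced) three-leaf tree attains zero squared error while Na\"ive Bayes, with weights $(-1/2,1/2,2/5)$, incurs error $0.8$; the interesting mechanism there is that $x_3$ is individually as informative as $x_1,x_2$ but is correlated with them, and only the tree's internal node can exploit that. You instead work purely at the level of second moments with $n=2$: since the root performs an exact least-squares fit on the leaf outputs $(w_1^{(0)}x_1, w_2^{(0)}x_2)$, and these span the same space as $(x_1,x_2)$ whenever both leaf weights are nonzero, the composite predictor is \emph{automatically} the global least-squares solution --- so any nonsingular $\Sig$ with $\Sig_{1,2}\neq 0$ and both $b_i\neq 0$ for which $\Sig^{-1}b \neq (b_1/\Sig_{1,1}, b_2/\Sig_{2,2})$ does the job, and your explicit instance (root system $\bigl[\begin{smallmatrix}1 & 1/4\\ 1/4 & 1/4\end{smallmatrix}\bigr]$, right-hand side $(1,1/4)^\top$, solution $(1,0)$) checks out. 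Your argument is more minimal and exposes a general invariance principle (for two features the tree is never worse than the linear architecture, provided no leaf weight vanishes); the paper's example is less structural but exercises a genuinely multi-node tree and doubles as the running example referenced in the discussion after the proof and in Proposition 3. One small caveat: your closing remark about padding with features independent of $y$ would produce leaf weights $w_i^{(0)}=0$ and hence a singular Gram matrix at the parent node, so that extension needs a word about the degenerate case (the padded features carry zero prediction, so any weight works and the conclusion survives); it is an aside and does not affect the validity of the two-feature construction.
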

\begin{proof}
Suppose the data comes from a uniform distribution over the following four
points:
\begin{center}
\begin{tabular}{l|ccc|c}
& $x_1$
& $x_2$
& $x_3$
& $y$
\\
\hline
Point 1
& $+1$ & $+1$ & $-1/2$ & $+1$ \\
Point 2
& $+1$ & $-1$ & $-1$ & $-1$ \\
Point 3
& $-1$ & $-1$ & $-1/2$ & $+1$ \\
Point 4
& $-1$ & $+1$ & $+1$ & $+1$
\end{tabular}
\end{center}
Na\"ive Bayes yields the weights $w = (-1/2,1/2,2/5)$, which incurs mean
squared-error $0.8$.
On the other hand, gradient descent with the binary tree architecture
learns additional weights:
\begin{center}
  \begin{tikzpicture}
    \node[var] (a) at (0,0) {$x_1$};
    \node[var] (b) at (2,0) {$x_2$};
    \node[var] (c) at (4,0) {$x_3$};
    \node[proc] (1) at (0,1.5) {};
    \node[proc] (2) at (2,1.5) {};
    \node[proc] (3) at (4,1.5) {};
    \node[proc] (4) at (1,3) {};
    \node[proc] (5) at (3,3) {};
    \node[proc] (6) at (2,4.5) {};
    \draw[->,style=thick] (a) -- (1) node [black,midway,xshift=-9pt] {$-\frac{1}{2}$};
    \draw[->,style=thick] (b) -- (2) node [black,midway,xshift=5pt] {$\frac{1}{2}$};
    \draw[->,style=thick] (c) -- (3) node [black,midway,xshift=5pt] {$\frac{2}{5}$};
    \draw[->,style=thick] (1) -- (4) node [black,midway,xshift=-9pt] {$1$};
    \draw[->,style=thick] (2) -- (4) node [black,midway,xshift=9pt] {$1$};
    \draw[->,style=thick] (3) -- (5) node [black,midway,xshift=9pt] {$1$};
    \draw[->,style=thick] (4) -- (6) node [black,midway,xshift=-9pt] {$3$};
    \draw[->,style=thick] (5) -- (6) node [black,midway,xshift=9pt] {$-5$};
  \end{tikzpicture}
\end{center}
which ultimately yields an overall weight vector of
\[
(-1/2 \cdot 1 \cdot 3, \ 1/2 \cdot 1 \cdot 3, \ 2/5 \cdot 1 \cdot -5)
\ = \ (-3/2, \ 3/2, \ -2)
\]
which has zero mean squared-error.
\end{proof}
In the proof example, the features are, individually, equally correlated
with the label $y$.
However, the feature $x_3$ is correlated with the two individually
uncorrelated features $x_1$ and $x_2$, but Na\"ive Bayes is unable to
discover this whereas the binary tree architecture can compensate for it.

Of course, as mentioned before, the binary tree architecture (and
Na\"ive Bayes) is weaker than the single-node linear architecture in
expressive power due to its limited accounting of feature correlation.
\begin{proposition} \label{prop:both-fail}
There exists a data distribution for which neither the binary tree
architecture nor Na\"ive Bayes can represent the least-squares linear
predictor.
\end{proposition}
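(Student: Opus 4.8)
The plan is to imitate the proof of Proposition~\ref{prop:nb-fails} and exhibit a single explicit distribution on which the least-squares weight vector $w^{*} = \Sig^{-1}b$ is the output of neither architecture. The structural idea driving the construction is that, with the three-leaf binary tree used above (leaves $x_{1},x_{2},x_{3}$; an internal node combining the $x_{1}$- and $x_{2}$-predictions; a root combining that with the $x_{3}$-prediction), the internal node performs an unconstrained $2\times 2$ regression on $(w_{1}^{(0)}x_{1},\,w_{2}^{(0)}x_{2})$ and hence converges to the \emph{full} least-squares projection $\hat y_{12}$ of $y$ onto $\mathrm{span}(x_{1},x_{2})$. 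Consequently the root --- and therefore the whole tree --- only ever predicts inside the two-dimensional subspace $\mathrm{span}(\hat y_{12},x_{3})$, whereas $\inner{w^{*}}{x}$ generically needs all three directions $x_{1},x_{2},x_{3}$; Na\"ive Bayes is even more restricted, using only the diagonal weights $b_{i}/\Sig_{i,i}$ summed trivially. So it suffices to pick a distribution for which $\inner{w^{*}}{x}\notin\mathrm{span}(\hat y_{12},x_{3})$ and $w^{*}$ is not the diagonal rule either.

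Concretely I would take $x\sim N(0,\Sig)$ with
\[
\Sig \ = \ \begin{pmatrix} 1 & 0 & 1/2 \\ 0 & 1 & 0 \\ 1/2 & 0 & 1 \end{pmatrix}
\qquad (\text{positive definite, } \det\Sig = 3/4)
\]
and $y = x_{1}+x_{2}+x_{3}$, so that $\Sig = \E[xx^{\top}]$, $b = \E[xy] = \Sig\,(1,1,1)^{\top} = (3/2,\,1,\,3/2)^{\top}$, and hence $w^{*} = \Sig^{-1}b = (1,1,1)$ with zero mean squared error (one may replace the Gaussian by any finitely supported distribution with the same second moments, since every quantity involved depends only on $\Sig$ and $b$). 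The first step is to dispatch Na\"ive Bayes: its weights are $(b_{1}/\Sig_{1,1},\,b_{2}/\Sig_{2,2},\,b_{3}/\Sig_{3,3}) = (3/2,1,3/2)\neq w^{*}$, and the residual $\tfrac12 x_{1}+\tfrac12 x_{3}$ of the corresponding predictor has strictly positive variance, so it does not represent $w^{*}$.

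The second step is to run the tree recursion from the excerpt on this $\Sig,b$. The leaves give $w_{i}^{(0)} = b_{i}/\Sig_{i,i} = (3/2,1,3/2)$. The node over $\{x_{1},x_{2}\}$ converges to $\hat y_{12} = \gamma_{1}x_{1}+\gamma_{2}x_{2}$ with $(\gamma_{1},\gamma_{2}) = (b_{1},b_{2}) = (3/2,\,1)$ (here $\Sig_{\{1,2\}} = I$); the crucial point is that this is \emph{not} proportional to $(w_{1}^{*},w_{2}^{*}) = (1,1)$, because the aliased contribution of $x_{3}$ has shifted the ratio to $3:2$. The root then regresses $y$ on $(\hat y_{12},\,w_{3}^{(0)}x_{3})$ with some coefficients $(v_{A},v_{B})$, so the overall weight vector on $(x_{1},x_{2},x_{3})$ has the form $(\tfrac32 v_{A},\,v_{A},\,\tfrac32 v_{B})$. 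Matching this to $w^{*} = (1,1,1)$ forces $\tfrac32 v_{A} = 1$ and $v_{A} = 1$ simultaneously, which is impossible --- equivalently $\inner{w^{*}}{x}\notin\mathrm{span}(\hat y_{12},x_{3})$ because the frozen $3:2$ ratio between the $x_{1}$ and $x_{2}$ coordinates cannot be corrected downstream. Hence the tree cannot represent $w^{*}$ either, which is the proposition.

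The step I expect to be the real obstacle is the \emph{choice of $\Sig$ that avoids the accidental symmetry} which would let the tree succeed. If $x_{1}$ and $x_{2}$ enter $\Sig$ and $y$ symmetrically --- the natural first guess, e.g.\ $\Sig_{1,3} = \Sig_{2,3}$ with $w^{*} = (1,1,1)$ --- then $(\gamma_{1},\gamma_{2})$ comes out proportional to $(w_{1}^{*},w_{2}^{*})$ and the tree recovers $w^{*}$ exactly; this is precisely the mechanism exploited in Proposition~\ref{prop:nb-fails}. Breaking the symmetry between the two subtrees of the internal node (here by taking $\Sig_{2,3} = 0 \neq \Sig_{1,3}$, and keeping $w_{3}^{*}\neq 0$ so that there is aliasing at all) is what makes the example work; everything else is routine $2\times 2$ and $3\times 3$ linear algebra.
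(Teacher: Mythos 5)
Your construction is correct, but it proves the proposition by a genuinely different mechanism than the paper. The paper's proof takes a four-point distribution in which $x_3$ is uncorrelated with the label, so $w_3^{(0)} = b_3/\Sig_{3,3} = 0$: the leaf annihilates $x_3$, every downstream weight only multiplies that zero, and hence \emph{both} Na\"ive Bayes and \emph{every} binary-tree topology assign zero weight to $x_3$, while the least-squares solution $(1,1,1)$ needs it (any predictor ignoring $x_3$ has error at least $1/2$). Your example instead keeps all leaf weights nonzero and exploits the ``scalar summary'' bottleneck discussed in the text: the internal node over $\{x_1,x_2\}$ converges to the within-pair least-squares projection $\tfrac32 x_1 + x_2$, freezing the ratio between the $x_1$- and $x_2$-coordinates at $3{:}2$, which no scalar applied at the root can reconcile with the $1{:}1$ ratio in $w^*=(1,1,1)$; your linear algebra ($\Sig$ positive definite, $b=(3/2,1,3/2)$, $\hat y_{12}=\tfrac32 x_1+x_2$, overall weights $(\tfrac32 v_A, v_A, \tfrac32 v_B)$) all checks out, and the reduction to second moments justifies replacing the Gaussian by a finite sample. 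What you lose relative to the paper is robustness to the tree topology: your distribution is defeated only for groupings that pair $x_1$ with $x_2$ (or $x_2$ with $x_3$); if the internal node instead pools $\{x_1,x_3\}$ one gets $\hat y_{13}=x_1+x_3$ and the root recovers $y$ exactly, so the tree succeeds. Since the proposition and the preceding proof fix the tree with $\{x_1,x_2\}$ grouped, this is an acceptable reading, but the paper's zero-correlation trick is both shorter and topology-independent, whereas yours better illustrates the ratio-freezing failure mode that motivates the global updates in the following section.
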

\begin{proof}
Suppose the data comes from a uniform distribution over following four
points:
\begin{center}
\begin{tabular}{l|ccc|c}
& $x_1$
& $x_2$
& $x_3$
& $y$
\\
\hline
Point 1
& $+1$ & $-1$ & $-1$ & $-1$ \\
Point 2
& $-1$ & $+1$ & $-1$ & $-1$ \\
Point 3
& $+1$ & $+1$ & $-1$ & $+1$ \\
Point 4
& $+1$ & $+1$ & $-1$ & $+1$
\end{tabular}
\end{center}
The optimal least-squares linear predictor is the all-ones vector $w^* =
(1,1,1)$ and incurs zero squared-error (since $1\cdot x_1 + 1\cdot x_2
+ 1\cdot x_3=y$ for every point).
However, both Na\"ive Bayes and the binary tree architecture yield weight
vectors in which zero weight is assigned to $x_3$, since $x_3$ is
uncorrelated with $y$; any linear predictor that assigns zero weight to
$x_3$ has expected squared-error at least $1/2$.
\end{proof}

\subsection{Experiments}

Here, we detail experimental results conducted on a medium size
proprietary ad display dataset.  The task associated with the dataset
is to derive a good policy for choosing an ad given user, ad, and page
display features.  This is accomplished via pairwise training
concerning which of two ads was clicked on and element-wise evaluation
with an offline policy evaluator~\citep{ExpSca08}.  There are several
ways to measure the size of this dataset---it is about 100Gbytes when
gzip compressed, has around 10M instances, and about 125G non-unique
nonzero features.  In the experiments, VW was run with $2^{24} \simeq
16M$ weights, which is substantially smaller than the number of unique
features.  This discrepancy is accounted for by the use of a hashing
function, with $2^{24}$ being chosen because it is large enough such
that a larger numbers of weights do not substantially improve results.

In the experimental results, we report the ratio of progressive
validation squared losses~\citep{BluKalLan99} and wall clock times to a
multicore parallelized version of Vowpal Wabbit running on the same
data and the same machines.  Here, the progressive validation squared
loss is the average over $t$ of $(y_t-\hat{y}_t)^2$ where critically,
$\hat{y}_t$ is the prediction just prior to an update.  When data is
independent, this metric has deviations similar to the average loss
computed on held-out evaluation data.

Every node has 8 CPU cores and is connected via gigabit Ethernet.  All
learning results are obtained with single pass learning on the dataset
using learning parameters optimized to control progressive validation
loss.  The precise variant of the multinode architecture we
experimented with is detailed in Figure~\ref{localupdate}.  In particular,
note that we worked with a flat hierarchy using 1-8 feature shards
(internal nodes).
All code is available in the current Vowpal Wabbit open source code
release.

\begin{figure}
a)\includegraphics[scale=0.45]{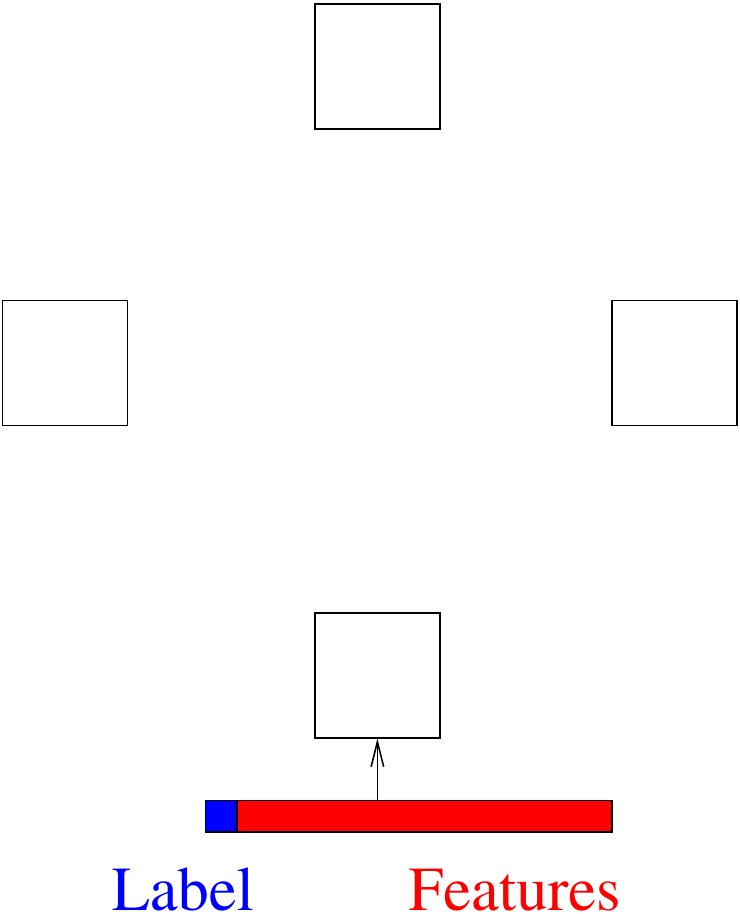}
b)\includegraphics[scale=0.45]{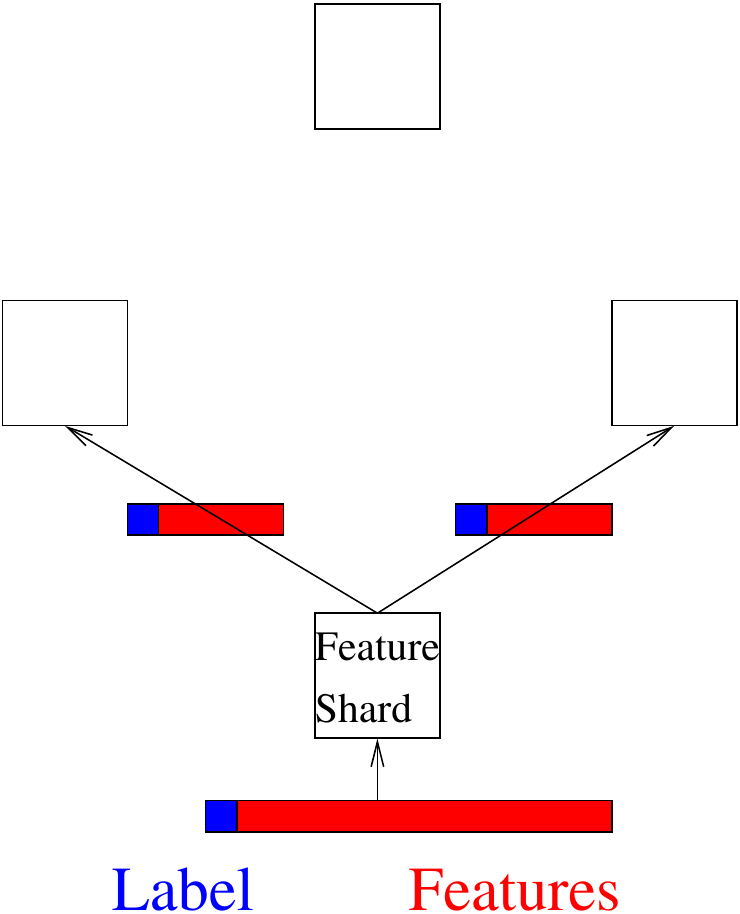}
c)\includegraphics[scale=0.45]{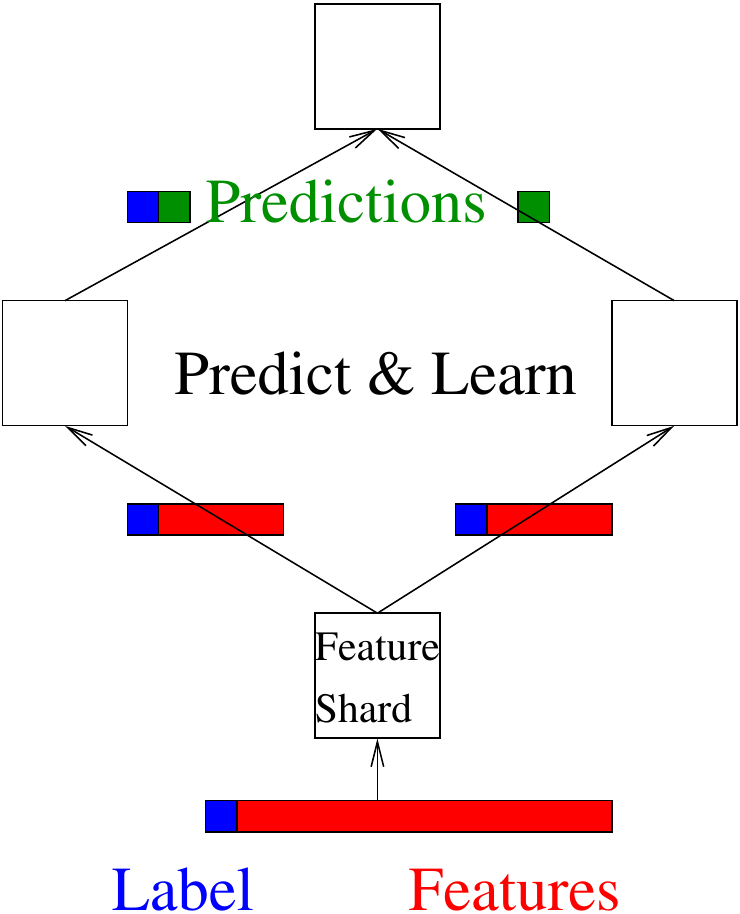}
d)\includegraphics[scale=0.45]{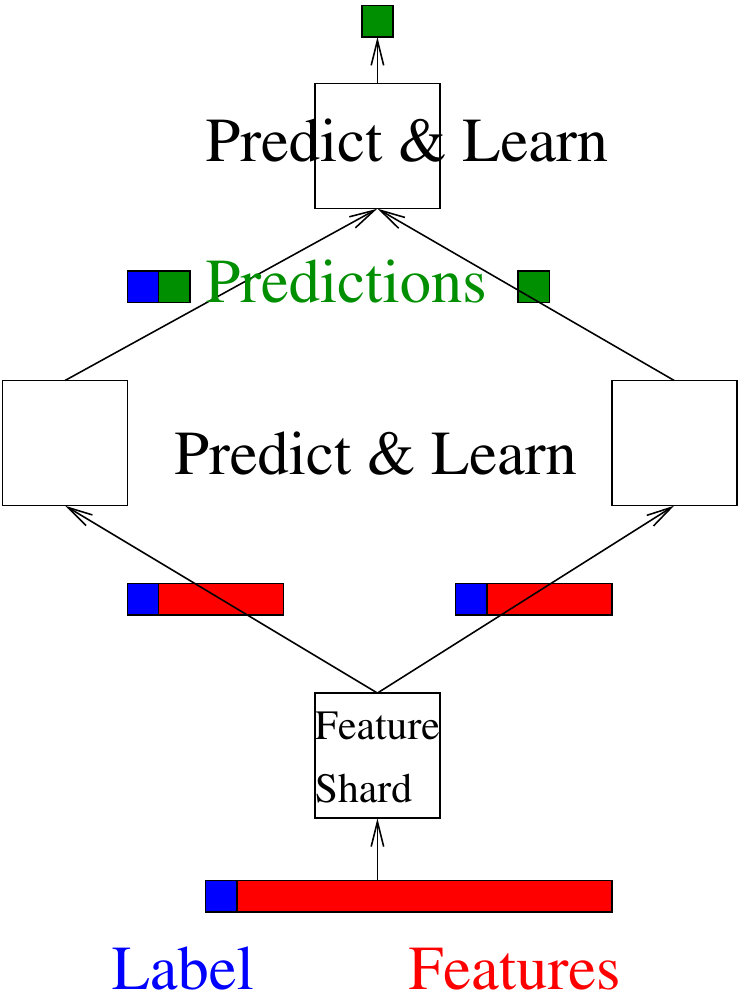}
\caption{Diagram of the parallel algorithm used in the experiments.  Step (a) starts with a full data instance.  Step (b) splits the instance's features across each shard while replicating the label to each shard.  In our experiments, the number of feature shards varies between 1 and 8. Step (c) does prediction and learning at each feature shard using only local information.  Step (d) combines these local predictions treating them as features for a final output prediction. \label{localupdate}}
\end{figure}

\begin{figure}
\begin{centering}
\begin{tabular}{cc}
\includegraphics[width=0.49\textwidth]{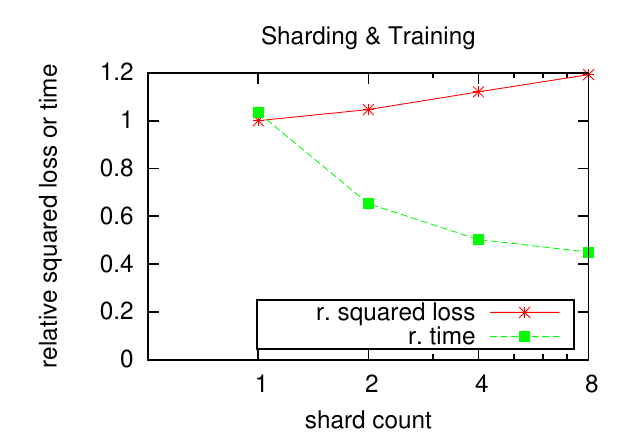} \hspace{-1cm} &
\includegraphics[width=0.49\textwidth]{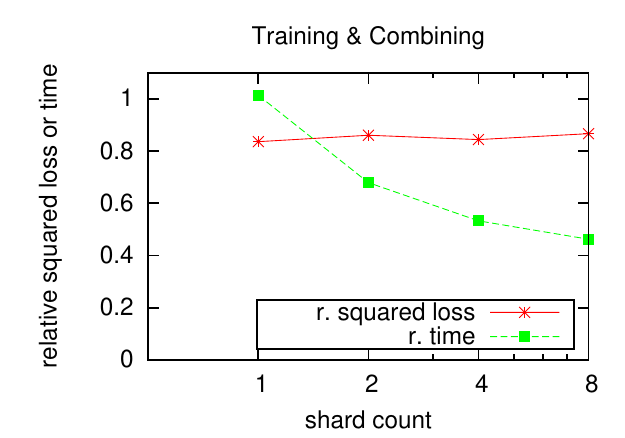} \\
(a) & (b)
\end{tabular}
\end{centering}
\caption{Plots of running time and loss versus shard count.
(a) Ratio of time and progressive squared
  loss of the shard and local train steps to a multicore
  single-machine instance of VW.  Here the squared loss is reported is
  the average of the squared losses at each feature shard,
  without any aggregation at the final output node.
(b) Ratio of time and squared loss for learning at the local
  nodes and passing information to the final output node where a final
  prediction is done. \label{fig:results}}
\end{figure}

Results are reported in Figure~\ref{fig:results}.
The first thing to note in Figure~\ref{fig:results}(a) is that there is
essentially no loss in time and precisely no loss in solution quality for
using two machines (shard count $=1$): one for a no-op shard (used just for
sending data to the other nodes) and the other for learning.
We also note that the running time does not decrease linearly in the number
of shards, which is easily explained by saturation of the network by the
no-op sharding node.
Luckily, this is not a real bottleneck, because the process of sharding
instances is stateless and (hence) completely parallelizable.
As expected, the average solution quality
across feature shards also clearly degrades with the shard count.
This is because the increase in shard count implies a decrease in the
number of features per nodes, which means each node is able to use less
information on which to base its predictions.

Upon examination of Figure~\ref{fig:results}(b), we encounter a major
surprise---the quality of the final solution substantially improves over
the single node solution since the relative squared loss is less than $1$.
We have
carefully verified this.  It is most stark when there is only one
feature shard, where we know that the solution on that shard is
identical to the single node solution.  This output prediction is then
thresholded to the interval $[0,1]$ (as the labels are either $0$ or $1$) and passed to a final prediction
node which uses the prediction as a feature and one (default) constant
feature to make a final prediction.  This very simple final prediction
step is where the large improvement in prediction quality occurs.
Essentially, because there are only two features (one is constant!),
the final output node performs a very careful calibration which
substantially improves the squared loss.

Note that one may have the false intuition that because each node does
linear prediction the final output is equivalent to a linear predictor.
This is in fact what was suggested in the previous description of the
binary tree architecture.
However, this is incorrect due to thresholding of the final prediction of
each node to the interval [0,1].

Figure~\ref{fig:results}(b) shows that the improved solution quality
degrades mildly with the number of feature shards and the running time
is again not decreasing linearly.  We believe this failure to scale
linearly is due to limitations of Ethernet where the use of many small
packets can result in substantially reduced bandwidth.  

A basic question is: How effective is this algorithm in general?
Further experiments on other datasets (below) show that the limited
representational capacity does degrade performance on many other
datasets, motivating us to consider global update rules.

\section{Global Update Rules}

So far we have outlined an architecture that lies in-between Na\"ive
Bayes and a linear model. In this section, we investigate various
trade-offs between the efficiency of local training procedure of the
previous section and the richer representation power of a linear
model trained on a single machine. Before we describe these trade-offs,
let us revisit the proof of Proposition~\ref{prop:both-fail}. In that example, the node
which gets the feature that is uncorrelated with the label learns
a zero weight because its objective is to minimize its own loss, not
the loss incurred by the final prediction at the root of the tree.
This can be easily fixed if we are willing to communicate more information
on each link. In particular, when the root of the tree has received
all the information from its children, it can send back to them some
information about its final prediction. Once a node receives some information
from its master, it can send a similar message to its children. In
what follows we show several different ways in which information can
be propagated and dealt with on each node. We call these updates global\myidx{global training}
because, in contrast to the local training of the previous section,
they use information about the \emph{final prediction of the system},
to mitigate the problems that may arise from pure local training.

\subsection{Delayed Global Update}

An extreme example of global training is to avoid local training altogether
and simply rely on the update from the master. At
time $t$ the subordinate node sends to its master a prediction $p_{t}$
using its current weights and does not use the label until time $t+\tau$
when the master replies with the final prediction $\hat{y}_{t}$ of
the system. At this point the subordinate node computes the gradient
of the loss as if it had made the final prediction itself (i.e. it
computes $\left.g_{\textrm{dg}}=\frac{\partial\ell}{\partial\langle w,x\rangle}\right|_{\langle w,x\rangle=\hat{y}_{t}}x$,
where $x$ are the node's features) and updates its weights using this
gradient.

\subsection{Corrective Update}

Another approach to global training is to allow local training when
an instance is received but use the global training rule \emph{and
undo the local training} as soon as the final prediction is received.
More formally, at time $t$ the subordinate node sends a prediction
$p_{t}$ to its master and then updates its weights using the gradient
$\left.g=\frac{\partial\ell}{\partial\langle w,x\rangle}\right|_{\langle w,x\rangle=p_{t}}x$.
At time $t+\tau$ it receives the final prediction $\hat{y_{t}}$ and
updates its weights using $\left.g_{\textrm{\textrm{cor}}}=\frac{\partial\ell}{\partial\langle w,x\rangle}\right|_{\langle w,x\rangle=\hat{y}_{t}}x-\left.\frac{\partial\ell}{\partial\langle w,x\rangle}\right|_{\langle w,x\rangle=p_{t}}x$.
The rationale for using local training is that it might be better than doing
nothing while waiting for the master, as in the case of the delayed
global update. However, once the final prediction is available, there
is little reason to retain the effect of local training and the update
makes sure it is forgotten.

\subsection{Delayed Backpropagation}

Our last update rule treats the whole tree as a composition of linear
functions and uses the chain rule of calculus to compute the gradients
in each layer of the architecture. For example, the tree of Figure~\ref{fig:hier-arch}
computes the function 
\begin{eqnarray*}
f(x) & = & w_{12}^{(2)} f_{12}(x_1,x_2) + w_{34}^{(2)} f_{34}(x_3,x_4)\\ 
f_{12}(x_1,x_2) & = & w_1^{(1)} f_1(x_1) + w_2^{(1)} f_2(x_2)\\
f_{34}(x_3,x_4) & = & w_3^{(1)} f_3(x_3) + w_4^{(1)} f_4(x_4)\\ 
f_j(x_j) & = & w_j^{(0)}x_j \quad j = 1,2,3,4.
\end{eqnarray*}
As before let $\hat y=f(x)$ and $\ell(\hat y, y)$ be our loss. Then partial 
derivatives of $\ell$ with respect to any parameter $w_i^{(j)}$ can 
be obtained by the chain rule as shown in the following examples:
\begin{eqnarray*}
\frac{\partial \ell}{\partial w_3^{(1)}} & = & 
\frac{\partial \ell}{\partial f}
\frac{\partial f}{\partial f_{34}}
\frac{\partial f_{34}}{\partial w_{3}^{(1)}} = 
\frac{\partial \ell(\hat y, y)}{\partial \hat{y}} w_{34}^{(2)} f_3 \\
\frac{\partial \ell}{\partial w_3^{(0)}} & = & 
\frac{\partial \ell}{\partial f}
\frac{\partial f}{\partial f_{34}}
\frac{\partial f_{34}}{\partial f_{3}}
\frac{\partial f_{3}}{\partial w_{3}^{(0)}} = 
\frac{\partial \ell(\hat y, y)}{\partial \hat{y}} w_{34}^{(2)} w_3^{(1)} x_3
\end{eqnarray*}
Notice here the modularity implied by the chain rule: once the node that outputs
$f_{34}$ has computed $\frac{\partial \ell}{\partial w_3^{(1)}}$ it can send to
its subordinate nodes the product $\frac{\partial \ell(\hat y, y)}{\partial
\hat{y}} w_{34}^{(2)}$ as well as the weight it uses to weigh their predictions
(i.e. $w_3^{(1)}$ in the case of the node that outputs $f_3$). The subordinate
nodes then have all the necessary information to compute partial derivatives
with respect to their own weights.  The chain rule suggests that nodes whose
predictions are important for the next level are going to be updated more
aggressively than nodes whose predictions are effectively ignored in the next
level.

The above procedure is essentially the same as the backpropagation\myidx{backpropagation} procedure,
the standard way of training with many layers of learned transformations as in
multi-layer neural networks.  In that case the composition of simple nonlinear
functions yields improved representational power. Here the gain from using a
composition of linear functions is not in representational power, as $f(x)$
remains linear in $x$, but in the improved scalability of the system.

Another difference from the backpropagation procedure is the inevitable delay
between the time of the prediction and the time of the update. In particular, at
time $t$ the subordinate node performs local training and then sends a
prediction $\bar{p}_{t}$ using the updated weights. At time $t+\tau$ it receives
from the master the gradient of the loss with respect to $\bar{p}_{t}$:
$g=\left.\frac{\partial\ell}{\partial\bar{p}_{t}}\right|_{\langle
w,x\rangle=\hat{y}_{t}}$.  It then computes the gradient of the loss with
respect to its weights using the chain rule:
$g_{\textrm{bp}}=g\cdot\frac{\partial\bar{p}_{t}}{\partial w}$.  Finally the
weights are updated using this gradient.

\subsection{Minibatch Gradient Descent}
Another class of delay-tolerant algorithms is ``minibatch''\myidx{minibatch} approaches
which aggregate predictions from several (but not all) examples before
making an aggregated update.  Minibatch has even been advocated over
gradient descent itself~\citep[see][]{ShaSinSre07}, with the basic
principle being that a less noisy update is possible after some amount
of averaging.

A minibatch algorithm could be implemented either on an example shard
organized data~\citep[as per][]{DGSX10} or on feature shard organized data. 
On an example shard based system, minibatch requires transmitting and
aggregating the gradients of all features for an example. In terms of
bandwidth requirements, this is potentially much more expensive than 
a minibatch approach on a feature shard system regardless of whether 
the features are sparse or dense. On the latter only a few bytes/example are
required to transmit individual and joint predictions at each node.
Specifically, the minibatch algorithms use global training without any 
delay: once the master has sent all the gradients in the minibatch 
to his subordinate nodes they perform an update and the next minibatch
is processed.

Processing the examples in minibatches reduces the
variance of the used gradient by a factor of $b$ (the minibatch size)
compared to computing the gradient based on one example.  However, the
model is updated only once every $b$ examples, slowing
convergence.  

Online gradient descent has two properties that might make it
insensitive to the advantage provided by the minibatch gradient:
\begin{itemize}
\item Gradient descent is a somewhat crude method: it immediately forgets
the gradient after it uses it. Contrast this with, say, bundle methods~\citep{TeoVisSmoLe09} which use
the gradients to construct a global approximation of the loss.
\item Gradient descent is very robust. In other words, gradient
descent converges even if provided with gradient estimates of 
bounded variance.
\end{itemize}
Our experiments, in the next section, confirm our suspicions and show 
that, for simple gradient descent, the optimal minibatch size is $b=1$.

\subsection{Minibatch Conjugate Gradient}

The drawbacks of simple gradient descent suggest that a gradient
computed on a minibatch might be more beneficial to a more refined
learning algorithm.  An algorithm that is slightly more sophisticated than
gradient descent is the nonlinear conjugate gradient (CG) method.
Nonlinear CG \myidx{Conjugate gradient} can be thought as gradient descent with momentum where
principled ways for setting the momentum and the step sizes are used.
Empirically, CG can converge much faster than gradient descent when
noise does not drive it too far astray.  

Apart from the weight vector $w_t$, nonlinear CG maintains a direction vector $d_t$ 
and updates are performed in the following way:
\begin{eqnarray*}
d_{t} & = & -g_t+\beta_{t} d_{t-1}\\
w_{t+1} & = & w_t + \alpha_t d_{t} 
\end{eqnarray*}
where $g_t=\sum_{\tau\in \textrm{m}(t)} \nabla_w\ell(\inner{w}{x_\tau},y_\tau)\big|_{w=w_t}$ is the gradient computed
on the $t$-th minibatch of examples, denoted by $\textrm{m}(t)$. We set $\beta_t$ according to a widely used formula~\citep{gilbert1992global}:
\[
\beta_{t} = \max\left\{0,\frac{\inner{g_t}{g_t-g_{t-1}}}{||g_{t-1}||^2}\right\}
\]
which most of the time is maximized by the second term known as the
Polak-Ribi\`ere update.  Occasionally $\beta_t = 0$ effectively
reverts back to gradient descent.  Finally, $\alpha_t$ is set by
minimizing a quadratic approximation of the loss, given by its Taylor
expansion at the current point:
\[
\alpha_t = -\frac{\inner{g_t}{d_t}}{\inner{d_t}{H_t d_t}}
\]
where $H_t$ is the Hessian of the loss at $w_t$ on the $t$-th minibatch.
This procedure avoids an expensive line search and takes advantage of 
the simple form of the Hessian of a decomposable loss which allows
fast computation of the denominator. In general $H_t=\sum_{\tau \in
\textrm{m}(t)} \ell''_\tau x_\tau x_\tau^\top$ where $\ell''_\tau =
\frac{\partial^2\ell(\hat{y},y_\tau)}{\partial \hat{y}^2}
\big|_{\hat{y}=\inner{w_t}{x_\tau}}$ is the second derivative of the loss
with respect to the prediction for the $\tau$-th example in the minibatch
$\textrm{m}(t)$.
Hence the denominator is simply 
$\inner{d_t}{H_t d_t} = \sum_{\tau \in \textrm{m}(t)} \ell_\tau'' \inner{d_t}{x_\tau}^2$.

At first glance it seems that updating $w_t$ will be an operation involving two 
dense vectors. However we have worked out a way to perform these operations
in a lazy fashion so that all updates are sparse. To see how this could work 
assume for now that $\beta_t=\beta$ is fixed throughout the algorithm and
that the $i$-th element of the gradient is non-zero at times $t_0$ and $t_1>t_0$, and 
zero for all times $\tau$ in between. We immediately see that 
\[
d_{i,\tau} = \prod_{s=t_0}^\tau \beta_s d_{i,t_0} =  d_{i,t_0} \beta^{\tau-t_0}.
\]
Hence, we can compute the direction at any time by storing a timestamp for each weight
recording its last modification time. To handle the case of varying $\beta$,
we first conceptually split the algorithm's run in phases. A new phase starts
whenever $\beta_t = 0$, which effectively restarts the CG method. Hence, within each 
phase $\beta_t \neq 0$. To compute the direction,
we need to keep track of $B_t$ the cumulative product of the $\beta$'s from 
the beginning of the phase up to 
time $t$ and use $\prod_{s=t_0}^\tau \beta_s = \frac{B_\tau}{B_{t_0}}$. 
Next, because each direction $d_t$ changes $w$ by a different amount 
$\alpha_t$ in each iteration, we must keep track of $A_t=\sum_{s=1}^t \alpha_s B_s $.
Finally, at time $t$ the update for a weight whose feature $i$ was last seen at time $\tau$ is:
\[
w_{t,i} = w_{\tau,i} + \frac{A_t-A_{\tau-1}}{B_\tau} d_{\tau,i}.
\]

\subsection{Determinizing the Updates}

In all of the above updates, delay plays an important role. Because
of the physical constraints of the communication, the delay $\tau$
can be different for each instance and for each node. This can have
an adverse effect on the reproducibility\myidx{reproducibility} of our results. To see this
it suffices to think about the first time a leaf node receives a response.
If that varies, then the number of instances for which this node will
send a prediction of zero to its master varies too. Hence the weights
that will be learned are going to be different. To alleviate this
problem and ensure reproducible results our implementation takes special
care to impose a deterministic schedule of updates. This has also
helped in the development and debugging of our implementation. Currently,
the subordinate node switches between local training on new instances
and global training on old instances in a round robin fashion, after
an initial period of local training only, that maintains $\tau=1024$
(which is half the size of the node's buffer). In other words, the
subordinate node will wait for a response from its master if doing
otherwise would cause $\tau >1024$. It would
also wait for instances to become available if doing otherwise would
cause $\tau<1024$, unless the node is processing the last 1024 instances
in the training set.

\section{Experiments}

Here we experimentally compare the predictive performance of the local, 
the global, and the centralized update rules. We derived
classification tasks from the two data sets described in
Table~\ref{table:datasets}, trained predictors using each training
algorithm, and then measured performance on separate test sets. 
For each algorithm, we perform a separate search for the best
learning rate schedule of the form $\eta_t=\frac{\lambda}{\sqrt{t+t_0}}$
with $\lambda \in \{2^i\}_{i=0}^9$, $t_0 \in \{10^i\}_{i=0}^6$. We report
results with the best learning rate we found for each algorithm and task. For 
the minibatch case we report a minibatch size of $1024$ but we also 
tried smaller sizes even though there is little evidence that
they can be parallelized efficiently. Finally we report the 
performance of a centralized stochastic gradient descent (SGD)
which corresponds to minibatch gradient descent with a batch size of 1.

\begin{table}
\begin{tabular}{|l|c|c|}
Name & \# training data & \# testing data \\
\hline
RCV1 & $780$K & $23$K \\
Webspam & $300$K & $50$K \\
\hline
\end{tabular}
\caption{Description of data sets in global experiments.}
\label{table:datasets}
\end{table}

We omit results for the Delayed Global and Corrective update rules because
they have serious issues with delayed feedback. Imagine trying to 
control a system (say, driving a car) that responds to actions after much
delay. Every time an action is taken (such as steering in one direction) 
it is not clear how much it has affected the response of the system. If 
our strategy is to continue performing the same action until its effect is 
noticeable, it is likely that by the time we receive all the delayed feedback, 
we will have produced an effect much larger than the desired.
To reduce the effect we can try to undo our action which of course can 
produce an effect much smaller than what was desirable. The system then
oscillates around the desired state and never converges there. This is 
exactly what happens with the delayed global and corrective update rules.
Delayed backpropagation is less susceptible to this problem because 
the update is based on both the global and the local gradient. Minibatch
approaches completely sidestep this problem because the information
they use is always a gradient at the current weight vector. 

In Figure~\ref{fig:global_results} we report our results on each data
set.  We plot the test accuracy of each algorithm under different
settings.  ``Backprop x8'' is the same as backprop where the gradient
from the master is multiplied by 8 (we also tried 2, 4, and 16 and
obtained qualitatively similar results)---we tried this variant as a
heuristic way to balance the relative importance of the backprop
update and that of the local update.  In the first row of
Figure~\ref{fig:global_results}, we show that the performance of both
local and global learning rules degrades as the degree of
parallelization (number of workers) increases.  However, this effect
is somewhat lessened with multiple passes through the training data
and is milder for the delayed backprop variants, as shown in in the
second row for the case of 16 passes.  In the third and fourth rows,
we show how performance improves with the number of passes through the
training data, using 1 worker and 16 workers. Notice that SGD,
Minibatch, and CG are not affected by the number of workers as they
are global-only methods. Among these methods SGD dominates CG which in
turn dominates minibatch. However, SGD is not parallelizable while
minibatch CG is.

\begin{figure}
\begin{centering}
\begin{tabular}{c|c}
\includegraphics[width=0.49\textwidth]{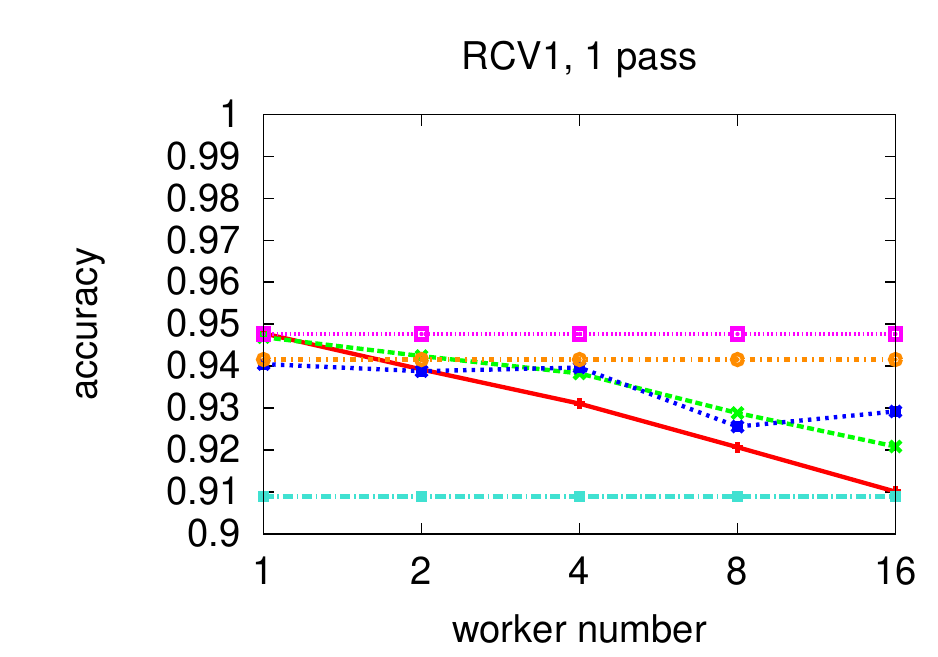} \hspace{-1cm} &
\includegraphics[width=0.49\textwidth]{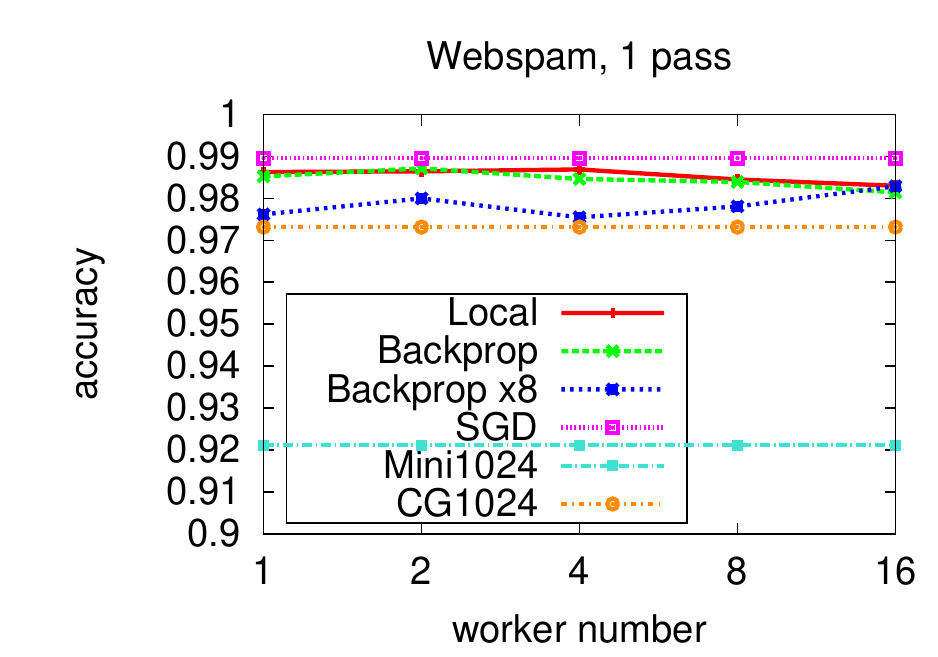}
\\
\includegraphics[width=0.49\textwidth]{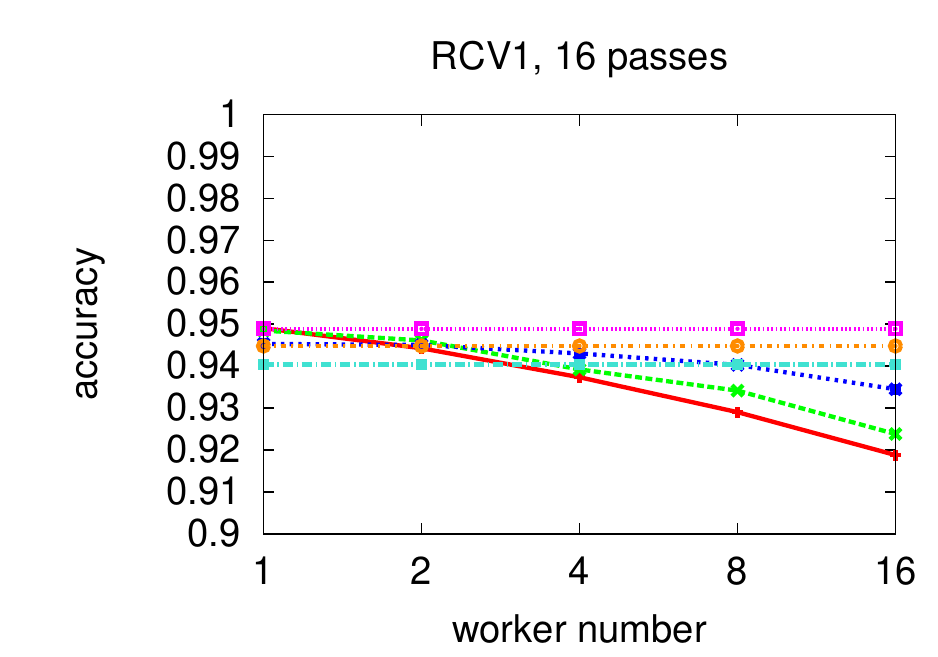} \hspace{-1cm} &
\includegraphics[width=0.49\textwidth]{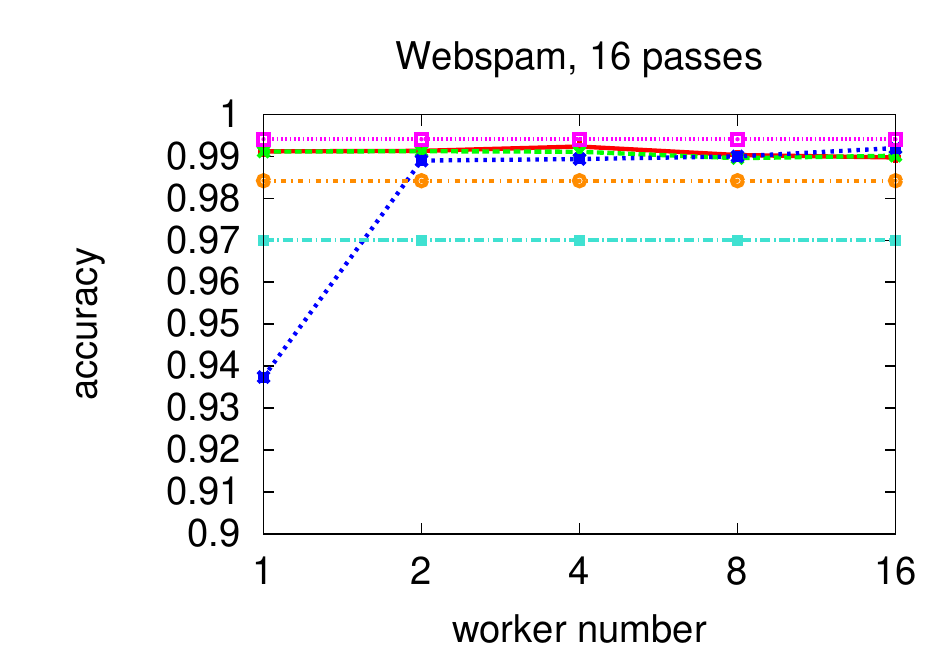}
\\
\includegraphics[width=0.49\textwidth]{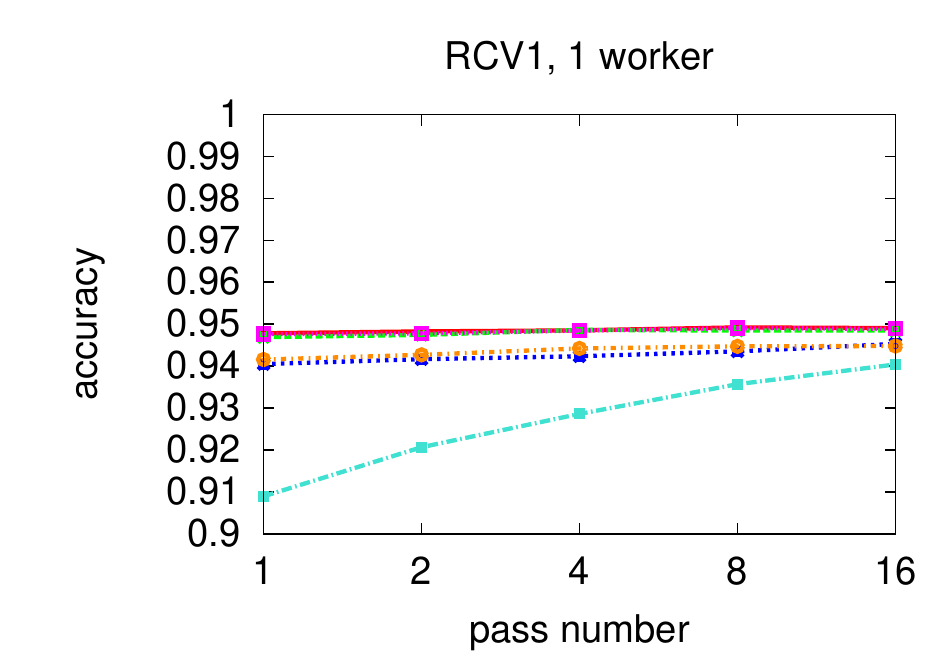} \hspace{-1cm} &
\includegraphics[width=0.49\textwidth]{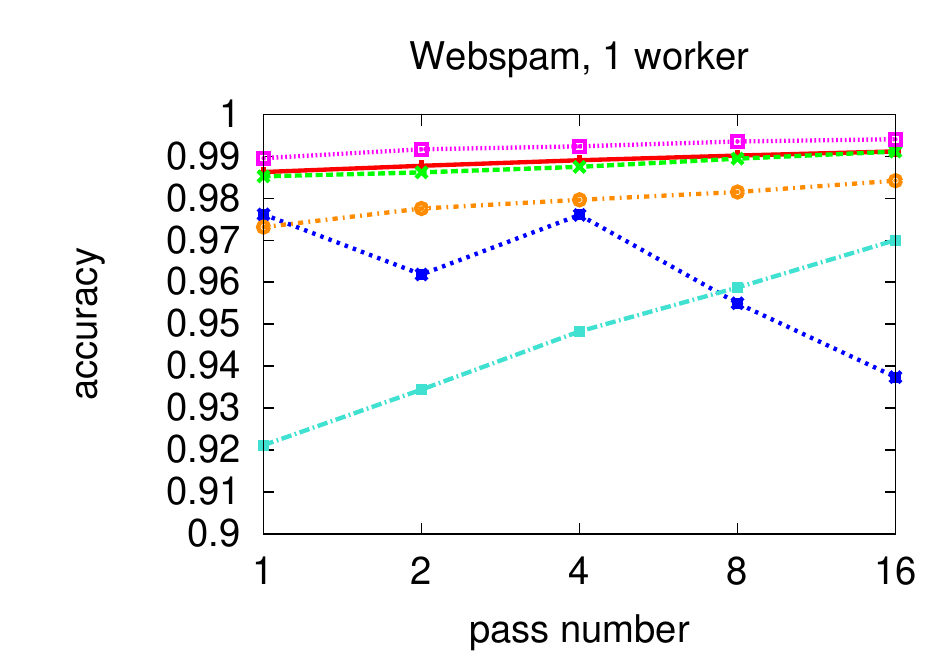}
\\
\includegraphics[width=0.49\textwidth]{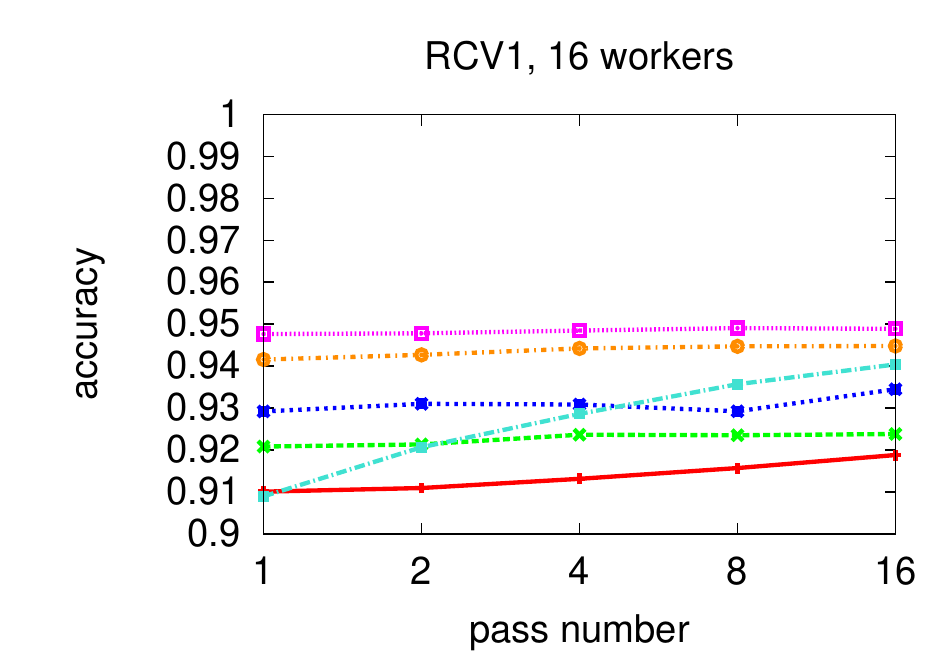} \hspace{-1cm} &
\includegraphics[width=0.49\textwidth]{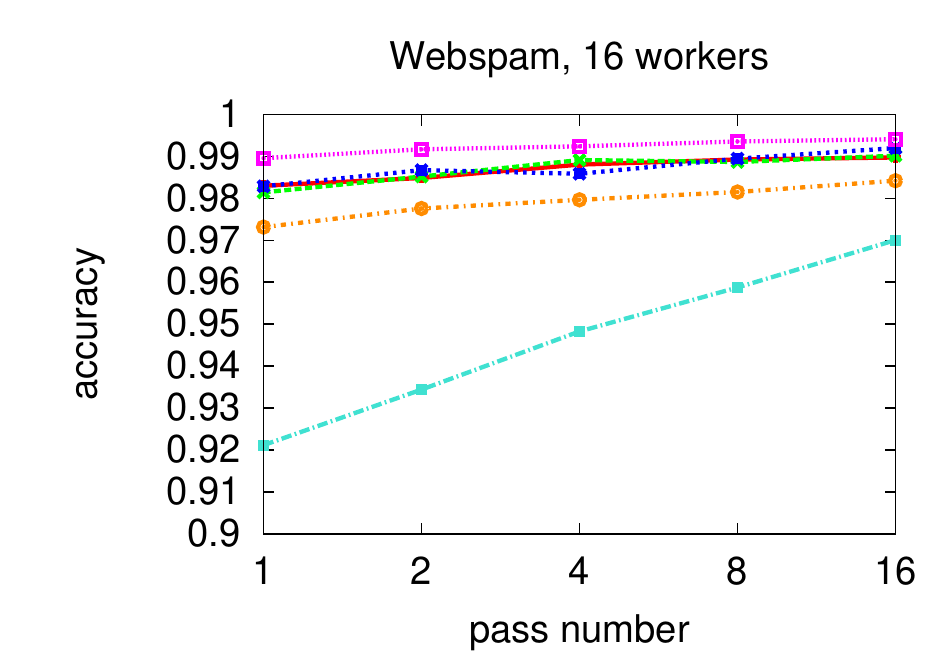}
\end{tabular}
\end{centering}
\caption{
Experimental results compare global to local learning rules.
In the first two rows, we see how performance degrades for various rules as the number of workers increases.
In the last two rows, we show how performance changes with multiple passes.
All plots share the same legend, shown in the top right plot.
}
\label{fig:global_results}
\end{figure}

\section{Conclusion}

Our core approach to scaling up and parallelizing learning is to first
take a very fast learning algorithm, and then speed it up even more.
We found that a core difficulty with this is dealing with the problem of
delay in online learning.
In adversarial situations, delay can reduce convergence speed by the delay
factor, with no improvement over the original serial learning algorithm.

We addressed these issues with parallel algorithms based on
feature sharding.
The first is simply a very fast multicore algorithm which manages to avoid
any delay in weight updates by virtue of the low latency between cores.
The second approach, designed for multinode settings, addresses the latency
issue by trading some loss of representational power for local-only
updates, with the big surprise that this second algorithm actually
\emph{improved} performance in some cases.
The loss of representational power can be addressed by incorporating
global updates either based on backpropagation on top of the local
updates or using a minibatch conjugate gradient method; experimentally, 
we observed that the combination of local and global updates can improve 
performance significantly over the local-only updates.

The speedups we have found so far are relatively mild due to working
with a relatively small number of cores, and a relatively small number
of nodes.  Given that we are starting with an extraordinarily fast
baseline algorithm, these results are unsurprising.  A possibility
does exist that great speedups can be achieved on a large cluster of
machines, but this requires further investigation.


\bibliography{bibfile}
\bibliographystyle{cambridgeauthordate}

\end{document}